\documentclass[11pt,a4paper]{article}
\usepackage[utf8]{inputenc}
\usepackage{authblk}
\usepackage[top=25truemm,bottom=25truemm,left=25truemm,right=25truemm,footskip=10truemm]{geometry}
\usepackage{booktabs}
\usepackage{amsmath}
\usepackage{amsthm}
\usepackage{float}
\usepackage{amssymb}
\usepackage{amsfonts}
\usepackage{ascmac}
\usepackage{scalefnt}
\usepackage{here}
\usepackage{indentfirst}
\usepackage{algorithmic}
\usepackage{algorithm}
\usepackage{setspace}
\usepackage{color}
\usepackage[pdftex]{graphicx}
\usepackage{subcaption}
\usepackage{lscape}
\usepackage{natbib}
\usepackage{url}
\usepackage{placeins}
\usepackage{tikz}
\DeclareMathOperator*{\argmin}{arg\,min}

\newtheorem{theorem}{Theorem}[section]
\newtheorem{proposition}[theorem]{Proposition}

\newtheorem{lemma}[theorem]{Lemma}
\newtheorem{assumption}{Assumption}[section]

\newtheorem{remark}{Remark}[section]

\title{Median Consensus Embedding for Dimensionality Reduction}
\author[1,*]{Yui Tomo}
\author[1]{Daisuke Yoneoka}
\affil[1]{Department of Epidemiology, National Institute of Infectious Diseases, Japan Institute for Health Security, 1-23-1 Toyama, Shinjuku-Ku, Tokyo 162-0052, Japan}
\affil[*]{E-mail: tomo.y@jihs.go.jp}
\date{}

\begin{document}
\maketitle

\begin{abstract}
This study proposes median consensus embedding (MCE) to address variability in low-dimensional embeddings caused by random initialization in nonlinear dimensionality reduction techniques such as $t$-distributed stochastic neighbor embedding.
MCE is defined as the geometric median of multiple embeddings.
By assuming multiple embeddings as independent and identically distributed random samples and applying large deviation theory, we prove that MCE achieves consistency at an exponential rate.
Furthermore, we develop a practical algorithm to implement MCE by constructing a distance function between embeddings based on the Frobenius norm of the pairwise distance matrix of data points.
Application to actual data demonstrates that MCE converges rapidly and effectively reduces instability.
We further combine MCE with multiple imputation to address missing values and consider multiscale hyperparameters.
Results confirm that MCE effectively mitigates instability issues in embedding methods arising from random initialization and other sources.
\end{abstract}

\noindent
\textbf{Keywords}: Consensus embedding, Geometric median, High-dimensional data, Large deviations theory, Low-dimensional embedding, $t$-SNE, UMAP

\section{Introduction}

Dimensionality reduction is a fundamental approach for exploring and visualizing high-dimensional data. 
Linear dimensionality reduction methods, such as principal component analysis (PCA), project the data onto low-dimensional linear subspaces~\citep{pearson1901liii}.
However, these approaches assume a global linear structure and often fail to adequately capture nonlinear structures such as complex clusters and trajectories. 
To address these limitations, nonlinear methods have been developed, such as $t$-distributed stochastic neighbor embedding ($t$-SNE) and uniform manifold approximation and projection (UMAP).
These methods aim to preserve local neighborhood relationships or manifold geometry in a low-dimensional embedding, and in many fields, they have become standard tools for visualizing high-dimensional data~\citep{van2008visualizing, mcinnes2018umap}. 
However, these nonlinear methods are sensitive to local optima, resulting in variability in embeddings even under the same parameter settings. This instability arises from random initialization.
Such instability may lead to misleading or inconsistent interpretations of the underlying data structure.

In practice, it is common to rerun these algorithms multiple times and to select, often implicitly, those configurations that align closely with the prior hypothesis.
However, such a post-selection procedure may lower reproducibility.
One simple approach to reduce such variability is to fix the initialization in a deterministic manner, for example, by using embeddings obtained from linear dimensionality reduction methods as initial values~\citep{kobak2019art}.
Such initializations may reduce variation, but they impose a particular linear structure in initialization and do not eliminate sensitivity to the local optima of the methods.
These considerations motivate the development of statistical procedures for integrating multiple embeddings into a single consensus representation.

Integrative approaches are commonly employed in various statistical and machine learning fields. 
For example, in the development of prediction models, ensemble-learning methods, such as bagging and boosting, are commonly used to achieve robust and stable results~\citep{breiman1996bagging,freund1995desicion,hastie2009ensemble}. 
Similarly, in cluster analysis, consensus clustering or cluster ensemble approaches have been proposed to combine the outputs of clustering algorithms prone to initialization sensitivity, such as $k$-means, to produce more robust and stable clustering results~\citep{strehl2002cluster, topchy2004analysis}.

In a similar spirit, the consensus-locally linear embedding (C-LLE) method constructs a consensus representation by combining multiple embeddings obtained by the locally linear embedding method~\citep{tiwari2008consensus}. 
Related works include a consensus embedding framework proposed by~\cite{viswanath2012consensus}.
They first evaluate each embedding according to how well it preserves the relative distances among triplets of points when compared with the original high-dimensional representation, then retain those embeddings that achieve high scores and aggregate the corresponding low-dimensional dissimilarities to form a consensus dissimilarity matrix.
These frameworks integrate multiple embeddings by aggregating pairwise dissimilarities between embeddings and applying projection methods such as multidimensional scaling.
However, these methods operate at the level of pairwise dissimilarities of embedded points and do not endow the space of embeddings with a geometric or probabilistic structure, and the input embeddings are treated as fixed rather than as random outputs of stochastic algorithms.
Consequently, these approaches do not provide statistical guarantees on how the consensus embedding reduces variability arising from random initialization.

In this study, we formalize the consensus embedding problem based on a metric function on an embedding space and propose a new consensus embedding approach.
Specifically, we make the following contributions.
\begin{enumerate}
    \item We propose median consensus embedding (MCE) as a novel approach for integrating multiple embeddings.
    We define an embedding space as a quotient space whose elements are equivalence classes with fixed location and scale, and identified up to rotations and reflections.
    We further define a proper metric on it and define MCE as the geometric median of multiple embeddings.
    Specifically, MCE is the embedding that minimizes the average distance between them.
    \item We prove that MCE converges to a population target embedding at an exponential rate as the number of embeddings increases by modeling each embedding as an independent and identically distributed (i.i.d.) random element drawn from a probability measure on the embedding space.
    We apply tools from large-deviation theory to prove this property~\citep{dembo2009large}.
    \item We develop a practical algorithm for implementing MCE by constructing a concrete distance function on the embedding space based on a distance function of the pairwise distance matrix of the data points.
    We prove that the optimization problem regarding embeddings is equivalent to that of pairwise distance matrices.
    \item We conduct empirical evaluations to demonstrate rapid convergence and reduction in embedding instability as the number of randomly initialized embeddings increases.
    We further illustrate a combined approach with multiple imputation and investigate the integration of embeddings obtained under multiscale hyperparameters.
\end{enumerate}

The remainder of this paper is organized as follows: 
In Section~\ref{sec:preliminaries}, we introduce the problem setup and related works. In Section~\ref{sec:method}, we describe our proposed method and provide a theoretical analysis. In Section~\ref{sec:algorithm}, we present the implementation algorithm developed for the proposed method. In Section~\ref{sec:illustration}, we provide illustrations and empirical evaluations of the proposed method using actual data. In Section~\ref{sec:discussion}, we discuss the strengths of our method and future research.

\section{Preliminaries and Related Works}
\label{sec:preliminaries}

We first describe the problem setup in this study.
We then introduce nonlinear dimensionality reduction methods and some existing consensus approaches as related works.

\subsection{Problem Setup}

Let $q$ and $p$ denote the input and embedding dimensions, respectively ($q > p$), and let $h \in \mathbb{R}^{n\times q}$ denote the input high-dimensional data matrix.
Dimensionality reduction methods with target dimension $p$ map $h$ to $p$-dimensional representations.
Let $y\in \mathbb{R}^{n \times p}$ denote a set of data points in the $p$-dimensional space.
Suppose that we have $m$ embeddings $y_1,y_2,\ldots,y_m \in \mathbb{R}^{n \times p}$ obtained by dimensionality reduction methods.
This study aims to develop procedures to integrate these multiple embeddings into a single consensus representation and provide a statistical guarantee regarding stability.

\subsection{Nonlinear Dimensionality Reduction Methods}

Nonlinear dimensionality reduction methods are designed to preserve some structures of the data.
Isomap approximates the geodesic distances via a neighborhood graph and applies multidimensional scaling (MDS) to obtain a low-dimensional embedding~\citep{tenenbaum2000global}.
Locally linear embedding (LLE) preserves local linear reconstruction weights within neighborhoods~\citep{belkin2003laplacian}.
Laplacian eigenmaps preserve the local proximity with respect to a graph Laplacian constructed from nearest-neighbor relations~\citep{roweis2000nonlinear}.
Diffusion maps use a diffusion process on the data graph and embed points to preserve diffusion distances~\citep{coifman2005geometric}.
Potential of heat-diffusion for affinity-based trajectory embedding (PHATE) uses the $t$-step potential distance via a diffusion process that can capture the data's continuous trajectory structure, especially for branching data.~\citep{moon2019visualizing}.
$t$-distributed stochastic neighbor embedding ($t$-SNE) preserves the local structure of data by minimizing the Kullback-Leibler divergence loss between neighboring probabilities in the high-dimensional space based on Gaussian distributions and the corresponding neighboring probabilities in the low-dimensional space based on $t$-distributions~\citep{van2008visualizing}.
Uniform manifold approximation and projection (UMAP) also preserves neighborhood probabilities but avoids computing normalization factors and uses cross-entropy loss for fast computation~\citep{mcinnes2018umap}.

As a representative and widely used method, we specifically describe $t$-SNE.
Let the $i$th row of $h$ be denoted by $h_{(i)}$.
Define similarity between pairs of data points $\{h_{(i)},h_{(j)}\}$ for $i=1,\ldots,n$ and $j=1,\ldots,n$ as:
\begin{align*}
    p_{i j}(h)
    &:=
    \frac{
        p_{i \mid j}(h)
        +
        p_{j \mid i}(h)
    }{2 n}
    ,
    \quad \text{where} \quad
    p_{i \mid j}(h)
    :=
    \frac{\exp \left(-\|h_{(i)}-h_{(j)}\|_2^2 / 2 \sigma_j^2\right)}{\sum_{k \neq j} \exp \left(-\|h_{(k)}-h_{(j)}\|_2^2 / 2 \sigma_j^2\right)}
    ,
\end{align*}
with the given scale parameters $\sigma_1,\ldots,\sigma_n > 0$, which are determined by a $\mathrm{perplexity}$ parameter.
Specifically, the values of scale parameters are determined such that
\begin{align*}
    2^{-\sum_j p_{i \mid j}(h) \log_2 p_{i \mid j}(h)} = \mathrm{perplexity}
    .
\end{align*}
Then, define similarity between pairs of data points $\{y_{(i)},y_{(j)}\}$ as
\begin{align*}
    q_{i j}(y)
    &:=
    \frac{\left(1+\|h_{(i)}-h_{(j)}\|_2^2\right)^{-1}}{\sum_{k \neq t}\left(1+\|h_{(k)}-h_{(t)}\|_2^2\right)^{-1}}
    .
\end{align*}
The data points embedded into a $p$-dimensional space are obtained as a minimizer of the Kullback–Leibler divergence loss function:
\begin{align*}
    \argmin_{y \in \mathbb{R}^{p}}
    \left\{
    \sum_{i=1}^{n}\sum_{j=1}^{n} p_{i j}(h) \log \frac{p_{i j}(h)}{q_{i j}(y)}
    \right\}
    .
\end{align*}
Generally, the gradient descent algorithm is employed to compute a resulting embedding.
Because $t$-SNE and UMAP provide clear visual separation of cluster structures, many fields have recently adopted them as standard tools for visualizing high-dimensional data.
However, their solutions are sensitive to initialization due to the nonconvexity of the loss function.

\subsection{Consensus Embedding Approaches}

We describe two existing consensus embedding approaches.
The consensus-locally linear embedding (C-LLE) was proposed by~\cite{tiwari2008consensus}.
In C-LLE, multiple embeddings $y_1,\ldots,y_m \in \mathbb{R}^{n\times p}$ are obtained by applying LLE to the same data matrix $h$ with different neighborhood sizes.
For each embedding $y_b$ and each pair of data points $\{y_{b(i)}, y_{b(j)}\}$, a low-dimensional distance is defined as $W_{ij}(y_b) := \| y_{b(i)} - y_{b(j)}\|_2$, for $i,j=1,\ldots,n$.
C-LLE regards the collection $\{W_{ij}(y_b)\}_{b=1}^m$ as noisy observations of an unknown consensus dissimilarity $\tilde{W}_{ij}$ and estimates $\tilde{W}_{ij}$ by the maximum likelihood estimator (MLE) under a specified probabilistic model.
A final low-dimensional embedding is then obtained by applying MDS to the estimated consensus dissimilarity matrix $(\tilde{W}_{ij})_{1\leq i,j\leq m}$.
Thus, C-LLE aggregates pairwise distances from multiple LLE embeddings via MLE to construct a consensus embedding.
\cite{viswanath2012consensus} also proposed a consensus embedding framework.
Let $\Lambda_{ij}$ denote a dissimilarity between input points $h_{(i)}$ and $h_{(j)}$.
For a given embedding $y$, let $\delta_{ij}(y)$ denote the corresponding dissimilarity between the embedded points $y_{(i)}$ and $y_{(j)}$.
Based on these dissimilarities, they define a triangle-relationship indicator
\begin{align*}
    \Delta(i,j,k,y)
    :=
\begin{cases}
    1, & \text{if}\, \Lambda_{ij} < \Lambda_{ik} \,\text{and}\, \Lambda_{ij} < \Lambda_{jk}, \,\text{then}\, \delta_{ij}(y) < \delta_{ik}(y) \,\text{and}\, \delta_{ij}(y) < \delta_{jk}(y)
    ,
    \\ 
    0, & \text{otherwise}
    .
\end{cases}
\end{align*}
They further define the corresponding embedding strength as
\begin{align*}
    \psi(y)
    :=
    \frac{1}{|\mathcal{T}|}
    \sum_{(i,j,k)\in\mathcal{T}} \Delta(i,j,k,y)
    ,
\end{align*}
where $\mathcal{T}:= \{(i,j,k): 1 \le i < j < k \le n\}$ is the set of all unordered triplets of different indices.
$\psi(y)$ implies the proportion of triplets for which the relative pairwise relationships are preserved.
An embedding with $\psi(y)=1$ is regarded as a true embedding, and embeddings with $\psi(y) > \theta$ are termed strong, where $\theta \in [0,1]$ is a pre-specified threshold.
Given $k$ strong embeddings $y^{(1)},\ldots,y^{(k)} \in \{y_1,\ldots,y_m\}$, they define a consensus dissimilarity for each pair $(i,j)$ by aggregating the corresponding dissimilarities across these embeddings: $\tilde{W}_{ij} = \Omega\left(\left\{W_{ij}(y^{(1)}),\dots,W_{ij}(y^{(k)})\right\}\right)$, where $\Omega: \mathbb{R}^{k} \to \mathbb{R}$ denotes an aggregation rule, such as mean or median.
The resulting dissimilarity matrix $(\tilde{W}_{ij})_{1\leq i,j\leq m}$ is then used as input to a projection method, such as MDS, to obtain a consensus embedding.

Both C-LLE and the method proposed by~\cite{viswanath2012consensus} operate at the level of pairwise dissimilarities.
In other words, they first construct or select a collection of embeddings, aggregate the corresponding low-dimensional dissimilarities to obtain a consensus dissimilarity matrix, and then apply a projection method to produce a single embedding. 
Therefore, these methods do not endow the embedding space with any metric structure.
Furthermore, in these approaches, the embeddings $y_1,\ldots,y_m$ are treated as deterministic objects, and thus the randomness of inputs is not modeled explicitly.
Their analyses, therefore, do not provide any statistical guarantees on how the consensus embedding reduces the variability arising from a random initialization.
The current study addresses this problem by defining a metric on a defined embedding space and proposes a geometric definition of a consensus embedding.

\section{Proposed Method}
\label{sec:method}

In this section, we present our integration approach that uses the geometric median of embeddings.
We first define a metric space of embeddings and then formulate the proposed method.
We further prove that the method achieves consistency with an exponential rate when embeddings are modeled as random elements with a defined probability measure.

\subsection{Definition of Embedding Space and Metric}

Low-dimensional embeddings are typically defined only up to translation, scaling, and orthogonal transformations (rotations and reflections).
Therefore, we define the embedding space as a quotient space whose elements are equivalence classes with fixed location and scale, and identified up to rotations and reflections.
First, we define
\begin{align*}
    \mathcal{Y}
    :=
    \left\{y \in \mathbb{R}^{n \times p}
    \mid
    \cfrac{1}{n} \sum_{i=1}^n y_{(i)} = \mathbf{0},\,
    \cfrac{1}{n} \sum_{i=1}^n\left\|y_{(i)}\right\|_2^2=1 \right\}
    ,
\end{align*}
where $y_{(i)}$ is the $i$th row of $y$ and $\mathbf{0}$ denotes the $p$-dimensional zero vector.
We then regard the members of $\mathcal{Y}$ as equivalent up to rotations and reflections, and we introduce the following equivalence relation $\sim$:
\begin{align*}
    y_1 \sim y_2
    \quad \Longleftrightarrow \quad
    \exists R \in \Omega(p)
    \quad \text{such that} \quad
    y_2^{\top} = R\,y_1^{\top},
\end{align*}
where $\Omega(p) \subset \mathbb{R}^{p \times p}$ denotes the set of all $p \times p$ orthogonal matrices.
We introduce the set of equivalence classes
\begin{align*}
    \tilde{\mathcal{Y}}
    :=
    \mathcal{Y}/\sim
    =
    \{ [y] \mid y \in \mathcal{Y} \},
\end{align*}
where, for each $y \in \mathcal{Y}$,
\begin{align*}
    [y]
    :=
    \{ z \in \mathcal{Y} \mid \exists R \in \Omega(p)
    \quad \text{such that} \quad
    z^{\top} = R y^{\top} \}.
\end{align*}
We then define the quotient map
\begin{align*}
    \pi : \mathcal{Y} \to \tilde{\mathcal{Y}},
    \quad
    \pi(y) = [y].
\end{align*}
We hereafter consider embeddings as members of $\tilde{\mathcal{Y}}$.

\begin{remark}
Since $\mathcal{Y}$ is a closed and bounded subset of $\mathbb{R}^{n\times p}$, $\mathcal{Y}$ is compact. 
Moreover, the set of orthogonal matrices $\Omega(p)$ is compact.
For each $y \in \mathcal{Y}$, the equivalence class $\pi(y)$ can be written as
\begin{align*}
    \pi(y)
    =
    \{ (R y^{\top})^{\top} : R \in \Omega(p) \}
    ,
\end{align*}
which is the image of the compact set $\Omega(p)$ under the continuous map
\begin{align*}
    \Phi : \Omega(p) \to \mathcal{Y}
    ,
    \quad
    \Phi(R) = (R y^{\top})^{\top}
    .
\end{align*}
Therefore, each equivalence class $\pi(y)$ is compact.
Furthermore, the quotient map $\pi : \mathcal{Y} \to \tilde{\mathcal{Y}}$ is continuous and surjective, and therefore its image $\tilde{\mathcal{Y}} = \pi(\mathcal{Y}) = \mathcal{Y}/\sim$ is also compact.
\end{remark}

Subsequently, we define the function 
$
    d : \tilde{\mathcal{Y}} \times \tilde{\mathcal{Y}} \to \mathbb{R}_{\geq 0}
$
which satisfies the following properties:
\begin{itemize}
    \item[(i)] 
    for all $y_1, y_2 \in \tilde{\mathcal{Y}}$,
    $
        d(y_1,y_2)=0 \, \Longleftrightarrow \, y_1 = y_2,
    $
    \item[(ii)] 
    for all $y_1, y_2 \in \tilde{\mathcal{Y}}$,
    $
        d(y_1,y_2) = d(y_2,y_1),
    $
    \item[(iii)]
    for all $y_1, y_2, y_3 \in \tilde{\mathcal{Y}}$,
    $
        d(y_1,y_2) + d(y_2,y_3) \geq d(y_1,y_3).
    $
\end{itemize}
Thus, $d(\cdot,\cdot)$ is a distance function on $\tilde{\mathcal{Y}}$.

\subsection{Median Consensus Embedding}

Suppose we have $m$ embeddings
$
    y_1,y_2,\dots,y_m \in \tilde{\mathcal{Y}}.
$
We define the MCE, denoted by $\hat{y}$, as the minimizer
\begin{align}
    \label{eq:consensus_embedding}
    \hat{y} := \argmin_{y \in \tilde{\mathcal{Y}}} \frac{1}{m} \sum_{i=1}^{m} d(y_i,y)
    .
\end{align}
In this formulation, $\hat{y}$ is the geometric median of $\{y_1,y_2,\dots,y_m\}$ in the metric space $(\tilde{\mathcal{Y}}, d)$.
Since the objective function in~\eqref{eq:consensus_embedding} is a continuous function on the compact metric space $(\tilde{\mathcal{Y}}, d)$, a minimizer always exists. 
Moreover, the MCE is invariant under translations, rescalings, and orthogonal transformations of the original low-dimensional embeddings obtained from dimensionality reduction methods, because it is defined on the quotient space $\tilde{\mathcal{Y}}$.

\subsection{Theoretical Analysis}

We then investigate the theoretical properties of the MCE in a probabilistic setting where embeddings are generated as random elements.
Let $\mu$ be a probability measure on $\tilde{\mathcal{Y}}$, and suppose that the embeddings $y_1,y_2,\dots,y_m$ are i.i.d. random elements with $\mu$.
We define the (population) true embedding $y^{*}$ as the solution to the optimization problem:
\begin{align*}
    y^{*}: = \argmin_{y\in\tilde{\mathcal{Y}}} \int_{\tilde{\mathcal{Y}}} d(y',y) d\mu(y')
    .
\end{align*}

For preparation, we introduce the following assumptions:

\begin{assumption}[Uniqueness of the true embedding]
\label{assump:trueembedding}
For any $y \in \tilde{\mathcal{Y}} \setminus \{y^{*}\}$, we assume that
\begin{align*}
    \int_{\tilde{\mathcal{Y}}} d(y',y) d\mu(y')
    >
    \int_{\tilde{\mathcal{Y}}} d(y',y^{*}) d\mu(y')
    .
\end{align*}
\end{assumption}

\begin{assumption}[Existence of moment generating function]
\label{assump:mgf}
Define 
$
    Z_{y}(y_i):= d(y_i,y^{*}) - d(y_i,y)
    .
$
Let $M_{y}(\lambda)$ denote the moment-generating function
\begin{align*}
    M_{y}(\lambda) := \mathbb{E}_{\mu}\left[\exp(\lambda Z_{y}(y_i))\right]
    .
\end{align*}
We assume that for all $y \in \tilde{\mathcal{Y}}$ and $\lambda \in \mathbb{R}$, we obtain
\begin{align*}
    \left|M_{y}(\lambda)\right| < \infty
    .
\end{align*}
\end{assumption}

Under these assumptions, we establish the following theorem on the consistency of MCE.

\begin{theorem}[Consistency with exponential rate]
\label{thm:main}
Suppose that Assumptions~\ref{assump:trueembedding} and~\ref{assump:mgf} are satisfied,
then for any $\epsilon > 0$, there exist $M \in \mathbb{N}$, $K>0$, and $\eta > 0$ such that if $m > M$, then
\begin{align*}
    \mathrm{Pr}\left( d(\hat{y},y^{*}) \geq \epsilon \right) \leq K \exp(- m \eta)
    .
\end{align*}
\end{theorem}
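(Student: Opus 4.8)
The plan is to work with the population and empirical objective functionals $F(y) := \int_{\tilde{\mathcal{Y}}} d(y',y)\,d\mu(y')$ and $F_m(y) := \frac{1}{m}\sum_{i=1}^{m} d(y_i,y)$, and to recast the event $\{d(\hat y,y^*)\ge\epsilon\}$ as a uniform deviation event for the centered empirical process $\frac{1}{m}\sum_i Z_y(y_i)$. The first observation I would record is a Lipschitz property in $y$: by the triangle inequality (iii), $|Z_y(y_i)-Z_{y'}(y_i)| = |d(y_i,y')-d(y_i,y)| \le d(y,y')$ uniformly in $y_i$, so both $F$ and $F_m$ are $1$-Lipschitz on $(\tilde{\mathcal{Y}},d)$. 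In particular $F$ is continuous, and since $d$ is continuous on the compact product $\tilde{\mathcal{Y}}\times\tilde{\mathcal{Y}}$ each $Z_y$ is bounded by the finite diameter of $\tilde{\mathcal{Y}}$ (consistent with Assumption~\ref{assump:mgf}).

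Next I would localize. Define the bad set $B_\epsilon := \{y\in\tilde{\mathcal{Y}} : d(y,y^*)\ge\epsilon\}$, which is closed and hence compact. By Assumption~\ref{assump:trueembedding} the excess functional $\Delta(y):=F(y)-F(y^*)$ is strictly positive on $B_\epsilon$, and being continuous on a compact set it attains a positive minimum $\delta := \min_{y\in B_\epsilon}\Delta(y)>0$. The key reduction is that, because $\hat y$ is a global minimizer of $F_m$, on the event $\{\hat y\in B_\epsilon\}$ we have $F_m(\hat y)\le F_m(y^*)$, equivalently $\frac{1}{m}\sum_i Z_{\hat y}(y_i)\ge 0$; since $\hat y\in B_\epsilon$ this yields
\[
\Pr\big(d(\hat y,y^*)\ge\epsilon\big)\;\le\;\Pr\Big(\,\sup_{y\in B_\epsilon}\frac{1}{m}\sum_{i=1}^{m} Z_y(y_i)\ge 0\Big).
\]
As $\mathbb E_\mu[Z_y(y_i)]=-\Delta(y)\le-\delta$ on $B_\epsilon$, the right-hand side asks the empirical mean to overshoot its (strictly negative) population mean, uniformly over $B_\epsilon$.

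To control the supremum I would pass from the continuum to a finite index set via a covering argument powered by the Lipschitz bound. Cover $B_\epsilon$ by finitely many balls $B(y_k,r)$, $k=1,\dots,N$, with centers $y_k\in B_\epsilon$ and radius $r=\delta/4$; here $N$ depends only on $\epsilon$ and $\delta$, not on $m$. For $y\in B(y_k,r)$ the Lipschitz bound gives $\frac{1}{m}\sum_i Z_y(y_i)\le \frac{1}{m}\sum_i Z_{y_k}(y_i)+r$, so the supremum event forces $\frac{1}{m}\sum_i Z_{y_k}(y_i)\ge -r$ for some $k$, which is a centered overshoot of at least $\Delta(y_k)-r\ge\delta-\delta/4=3\delta/4>0$. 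A union bound then reduces matters to $N$ pointwise tails. For each center I would apply the Chernoff/Cramér estimate: using finiteness of $M_{y_k}$ (Assumption~\ref{assump:mgf}), for every $\lambda\ge 0$, $\Pr(\frac{1}{m}\sum_i Z_{y_k}(y_i)\ge -r)\le \exp(m[\lambda r+\log M_{y_k}(\lambda)])$, so $\eta_k := -\inf_{\lambda\ge 0}[\lambda r+\log M_{y_k}(\lambda)]>0$, the strict positivity holding because the slope of $\lambda r+\log M_{y_k}(\lambda)$ at $\lambda=0$ equals $r+\mathbb E_\mu[Z_{y_k}(y_i)]=r-\Delta(y_k)\le r-\delta<0$. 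Taking $K:=N$, $\eta:=\min_k\eta_k>0$, and any $M\ge 1$ delivers the stated bound.

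I expect the uniformization to be the main obstacle: a fixed-$y$ Cramér bound is immediate, but $\hat y$ is data-dependent, so the deviation must be controlled simultaneously over all of $B_\epsilon$. The Lipschitz-in-$y$ property of $Z_y$ is precisely what makes this tractable, converting the continuum into a finite union bound whose cardinality $N$ is independent of $m$; the subtle point is to choose the covering radius ($r=\delta/4$ above) small enough that the transferred centered overshoot stays bounded away from $0$, guaranteeing each $\eta_k>0$ and hence $\eta=\min_k\eta_k>0$ after the finite minimum. A minor secondary issue is the measurable selection of $\hat y$ when the minimizer is non-unique, but since the reduction above holds pointwise for any global minimizer that lands in $B_\epsilon$, this does not affect the conclusion.
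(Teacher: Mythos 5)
Your proof is correct, and its skeleton is the same as the paper's: reduce the data-dependent event $\{d(\hat{y},y^*)\ge\epsilon\}$ to a finite net over the compact bad set by a covering argument, apply an exponential tail bound to the centered variables $Z_y(y_i)$ at each net point, and finish with a union bound. The execution, however, differs in three ways that each buy something. First, you power the covering step with the deterministic $1$-Lipschitz bound $|Z_y(y_i)-Z_{y'}(y_i)|\le d(y,y')$ from the reverse triangle inequality, rather than with uniform continuity of the empirical objective $f(y)=\frac{1}{m}\sum_i d(y_i,y)$ as in Lemma~\ref{lem:technical_lemma_coverage}; this makes the covering radius, and hence the net size $N$, manifestly independent of the sample, whereas the paper applies Heine--Cantor to a random function and then upgrades $\hat{S}^m_{y^{(j)}}>-\eta$ to $\hat{S}^m_{y^{(j)}}\ge 0$ by a contradiction argument in which the net itself depends on $\eta$ -- a step your version sidesteps entirely. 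Second, you keep a quantitative margin $r=\delta/4$ in the transferred event $\{\frac{1}{m}\sum_i Z_{y_k}(y_i)\ge -r\}$ instead of pushing the threshold to $0$; since $\mathbb{E}_\mu[Z_{y_k}(y_i)]\le r-\delta<0$, this is exactly what keeps each rate $\eta_k$ strictly positive without any limiting argument. Third, you use the finite-$m$ Chernoff bound in place of the asymptotic form of Cram\'er's theorem, so your inequality holds for every $m\ge 1$ and the threshold $M$ in the statement becomes vacuous, a mild strengthening of Theorem~\ref{thm:main}. The only step worth spelling out is why $\lambda\mapsto\log M_{y_k}(\lambda)$ is differentiable at $0$ with derivative $\mathbb{E}_\mu[Z_{y_k}(y_i)]$, which justifies the claim that the infimum defining $\eta_k$ is attained at a strictly negative value; this follows from Assumption~\ref{assump:mgf}, and is in fact automatic here because $Z_y$ is bounded by the diameter of the compact space $\tilde{\mathcal{Y}}$, as you note.
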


Theorem~\ref{thm:main} implies that the probability of deviation reduces at an exponential rate in the number of embeddings $m$.
Therefore, when Assumptions~\ref{assump:trueembedding} and~\ref{assump:mgf} are satisfied, there is no need to prepare an excessively large number of embeddings as inputs.
Even a moderate value of $m$ may be enough to ensure that $\hat{y}$ is close to $y^{*}$ with high probability. 
This provides a practical justification for using the MCE as a consensus representation when obtaining a large number of embeddings is computationally expensive.

\section{Algorithm Construction}
\label{sec:algorithm}

We develop a practical algorithm for applying the MCE to real-world data.
Since the embeddings are defined only up to translation, scaling, and orthogonal transformations (rotations and reflections), we first represent each embedding by the matrix of pairwise distances between its embedded points.
We then reformulate the optimization problem in terms of the distances between these distance matrices and construct an implementable algorithm for computing the MCE.

\subsection{Reformulation of the Optimization Problem}

We define the mapping $X: \mathcal{Y} \to \mathcal{X} \subset \mathbb{R}^{n \times n}$ as
\begin{align}
    \label{eq:def_X}
    X(y)_{(ij)}
    :=
    \| y_{(i)} - y_{(j)} \|_2,
    \quad \text{for } i,j=1,\ldots,n,
\end{align}
where $X(y)_{(ij)}$ denotes the $(i,j)$th entry of $X(y)$.
Furthermore, we define the mapping $\tilde{X}: \tilde{\mathcal{Y}} \to \mathcal{X}$ as
\begin{align*}
    \tilde{X}(y) := X(y'),
\end{align*}
where $y' \in \mathcal{Y}$ is a representative of $y \in \tilde{\mathcal{Y}}$.

\begin{remark}
Since $\mathcal{Y}$ is compact and $X$ is continuous, its image $\mathcal{X}$ is also compact.
\end{remark}

Let $D: \mathcal{X}\times\mathcal{X} \to \mathbb{R}_{\geq 0}$ be a distance function on $\mathcal{X}$.
We then obtain the following proposition regarding the construction of a distance function on $\tilde{\mathcal{Y}}$.

\begin{proposition}[Construction of distance function on $\tilde{\mathcal{Y}}$]
\label{prop:distance_x}
Define $d : \tilde{\mathcal{Y}} \times \tilde{\mathcal{Y}} \to \mathbb{R}_{\geq 0}$ by
\begin{align*}
    d(y_1,y_2)
    :=
    D\left(\tilde{X}(y_1),\tilde{X}(y_2)\right)
    .
\end{align*}
Then $d(\cdot,\cdot)$ is a distance function on $\tilde{\mathcal{Y}}$.
\end{proposition}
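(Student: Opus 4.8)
The plan is to verify the three defining properties of a distance function for $d$, using that $D$ is already a distance on $\mathcal{X}$. Before anything else, I would confirm that $\tilde{X}$ is well-defined, i.e. that $X$ is constant on each equivalence class (otherwise $d$ does not even make sense). If $y_1 \sim y_2$, then $y_2^\top = R\,y_1^\top$ for some $R \in \Omega(p)$, so each row satisfies $y_{2(i)}^\top = R\,y_{1(i)}^\top$; since $R$ preserves the Euclidean norm, $\|y_{2(i)} - y_{2(j)}\|_2 = \|R(y_{1(i)}^\top - y_{1(j)}^\top)\|_2 = \|y_{1(i)} - y_{1(j)}\|_2$, hence $X(y_1) = X(y_2)$. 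Thus $\tilde{X}$ does not depend on the chosen representative.

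Symmetry (ii) and the triangle inequality (iii) are then inherited directly from $D$: $d(y_1,y_2) = D(\tilde{X}(y_1),\tilde{X}(y_2)) = D(\tilde{X}(y_2),\tilde{X}(y_1)) = d(y_2,y_1)$, and $d(y_1,y_2) + d(y_2,y_3) = D(\tilde{X}(y_1),\tilde{X}(y_2)) + D(\tilde{X}(y_2),\tilde{X}(y_3)) \geq D(\tilde{X}(y_1),\tilde{X}(y_3)) = d(y_1,y_3)$. These require no new ideas. The direction $y_1 = y_2 \Rightarrow d(y_1,y_2) = 0$ in property (i) is equally immediate from well-definedness, since then $\tilde{X}(y_1) = \tilde{X}(y_2)$ and $D$ vanishes on the diagonal.

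The substance lies in the converse of (i). From $d(y_1,y_2) = 0$ and the fact that $D$ separates points, I obtain $\tilde{X}(y_1) = \tilde{X}(y_2)$; that is, representatives $y_1', y_2' \in \mathcal{Y}$ have identical pairwise distance matrices, and I must deduce $y_1' \sim y_2'$. The plan is the classical reconstruction-from-distances argument: square entrywise to get equal squared-distance matrices $\Delta_1 = \Delta_2$; apply double centering $G = -\tfrac{1}{2} J \Delta J$ with $J = I_n - \tfrac{1}{n}\mathbf{1}\mathbf{1}^\top$, where the centering constraint $\tfrac{1}{n}\sum_i y_{(i)} = \mathbf{0}$ guarantees that this recovers exactly the Gram matrix $G = y' y'^\top$ of each configuration with no residual translation term; and conclude $y_1' y_1'^\top = y_2' y_2'^\top$.

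Finally, I would turn equality of Gram matrices into a genuine orthogonal relation. Writing $a_i = y_{1(i)}'^\top$ and $b_i = y_{2(i)}'^\top$, the identity $\langle a_i, a_j\rangle = \langle b_i, b_j\rangle$ for all $i,j$ lets me define a map $T_0$ by $a_i \mapsto b_i$ on $\mathrm{span}\{a_i\}$; it is well-defined and norm-preserving because $\|\sum_i c_i b_i\|_2^2 = \sum_{i,j} c_i c_j \langle b_i, b_j\rangle = \sum_{i,j} c_i c_j \langle a_i, a_j\rangle = \|\sum_i c_i a_i\|_2^2$. Since $T_0$ is a linear isometry between equidimensional subspaces of $\mathbb{R}^p$, I extend it to $R \in \Omega(p)$ by mapping $(\mathrm{span}\{a_i\})^\perp$ isometrically onto $(\mathrm{span}\{b_i\})^\perp$; then $y_{2(i)}'^\top = R\,y_{1(i)}'^\top$ for all $i$, i.e. $y_2'^\top = R\,y_1'^\top$, giving $y_1' \sim y_2'$ and hence $y_1 = y_2$ in $\tilde{\mathcal{Y}}$. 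I expect this last step---promoting equal Gram matrices to an element of the full orthogonal group, including the isometric extension off the common column space and the fact that reflections are permitted---to be the main obstacle, since it is exactly where the quotient structure is used; the double-centering bookkeeping is routine but must be written so that the centering assumption is genuinely invoked.
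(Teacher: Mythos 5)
Your proof is correct, and at the crucial step it is substantially more careful than the paper's own argument. The routine parts coincide: both you and the paper inherit symmetry and the triangle inequality directly from $D$, and both reduce the separation property to showing that $\tilde{X}(y_1)=\tilde{X}(y_2)$ forces $y_1=y_2$ in $\tilde{\mathcal{Y}}$. Where you diverge is that the paper simply asserts this implication --- it writes that ``$X(y_1')=X(y_2')$ holds because $y_1'$ and $y_2'$ differ only by an orthogonal transformation,'' which is the converse of what needs to be shown (that converse is exactly the well-definedness of $\tilde{X}$, proved separately in the paper's Proposition~\ref{prop:rewrite_opt}); no argument is given for why equal pairwise-distance matrices imply the representatives are related by an element of $\Omega(p)$. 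You supply precisely the missing reconstruction argument: squaring the distance matrices, double-centering with $J=I_n-\tfrac{1}{n}\mathbf{1}\mathbf{1}^\top$ and using the centering constraint in the definition of $\mathcal{Y}$ to recover the Gram matrices exactly, and then promoting equality of Gram matrices to an orthogonal map by defining the isometry $a_i\mapsto b_i$ on $\mathrm{span}\{a_i\}$ and extending it isometrically to all of $\mathbb{R}^p$ (the equality of ranks of the two Gram matrices guarantees the orthogonal complements have equal dimension, so the extension exists). Your version is the one that actually closes the proof; the paper's reads as a gap, or at best as an appeal to a classical fact left implicit. Your explicit check that $X$ is constant on equivalence classes, so that $d$ is well defined at all, is likewise done in the paper only inside the proof of Proposition~\ref{prop:rewrite_opt}, so including it here is appropriate.
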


Based on this proposition, we define the following optimization problem:
\begin{align}
    \label{eq:opt_X}
    \hat{x} := \argmin_{x\in\mathcal{X}} \frac{1}{m}\sum_{i=1}^{m} D(x_i,x),
\end{align}
where $x_i := \tilde{X}(y_i)$ for $y_1,\ldots,y_m \in \tilde{\mathcal{Y}}$.
We now establish the following proposition regarding the equivalence of the optimization problems.

\begin{proposition}[Equivalence of optimization problems]
\label{prop:rewrite_opt}
Let $\hat{y}$ be a solution to the optimization problem~\eqref{eq:consensus_embedding} and $\hat{x}$ be a solution to~\eqref{eq:opt_X}.
Then $\tilde{X}(\hat{y}) = \hat{x}$ holds.
\end{proposition}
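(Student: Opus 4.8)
The plan is to show that the map $\tilde{X}$ identifies the two optimization problems exactly, so that it carries any solution of~\eqref{eq:consensus_embedding} to a solution of~\eqref{eq:opt_X}.

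First I would establish that $\tilde{X}:\tilde{\mathcal{Y}}\to\mathcal{X}$ is a bijection. Surjectivity is immediate from the construction: $\mathcal{X}$ is defined as the image of $X$ on $\mathcal{Y}$, and since $X$ factors as $\tilde{X}\circ\pi$, the image of $\tilde{X}$ is exactly $\mathcal{X}$. Injectivity is inherited from Proposition~\ref{prop:distance_x}: because $D$ is a metric on $\mathcal{X}$, the equality $\tilde{X}(y_1)=\tilde{X}(y_2)$ holds iff $D(\tilde{X}(y_1),\tilde{X}(y_2))=0$, i.e. iff $d(y_1,y_2)=0$, which by property (i) of $d$ is equivalent to $y_1=y_2$.

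Next I would rewrite the two objective functions through $\tilde{X}$. Writing $F(y):=\frac{1}{m}\sum_{i=1}^m d(y_i,y)$ and $G(x):=\frac{1}{m}\sum_{i=1}^m D(x_i,x)$ with $x_i=\tilde{X}(y_i)$, the definition $d(y_1,y_2)=D(\tilde{X}(y_1),\tilde{X}(y_2))$ gives $d(y_i,y)=D(x_i,\tilde{X}(y))$ for every $i$, hence $F=G\circ\tilde{X}$ on $\tilde{\mathcal{Y}}$. Since $\tilde{X}$ is a bijection onto $\mathcal{X}$, minimizing $F$ over $\tilde{\mathcal{Y}}$ is equivalent to minimizing $G$ over $\mathcal{X}$: the point $\hat{y}$ satisfies $F(\hat{y})\le F(y)$ for all $y\in\tilde{\mathcal{Y}}$ iff $G(\tilde{X}(\hat{y}))\le G(x)$ for all $x\in\mathcal{X}$. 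Thus $\tilde{X}(\hat{y})$ is a minimizer of $G$ and therefore solves~\eqref{eq:opt_X}. Existence of both minimizers is guaranteed by compactness of $\tilde{\mathcal{Y}}$ and $\mathcal{X}$ together with continuity of $F$ and $G$.

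I do not expect a substantial obstacle here: the only delicate point is the bijectivity of $\tilde{X}$, whose injectivity is exactly the content already secured in Proposition~\ref{prop:distance_x}, while the remainder is a change of variables. The one caveat I would address explicitly is uniqueness. If the minimizers are not unique, the clean statement $\tilde{X}(\hat{y})=\hat{x}$ should be read as the assertion that $\tilde{X}$ maps the solution set of~\eqref{eq:consensus_embedding} bijectively onto the solution set of~\eqref{eq:opt_X}, so that $\tilde{X}(\hat{y})$ is a valid choice of $\hat{x}$; the equality as written then holds verbatim whenever the minimizer is unique.
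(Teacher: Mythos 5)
Your proposal is correct and follows essentially the same route as the paper: establish that $\tilde{X}$ is well-defined and surjective onto $\mathcal{X}$, rewrite the objective as $F = G\circ\tilde{X}$ via the definition of $d$, and conclude by a change of variables. Your explicit verification of injectivity (via property (i) of $d$) and your caveat about non-unique minimizers are refinements the paper glosses over, but they do not change the substance of the argument.
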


Therefore, by solving the optimization problem~\eqref{eq:opt_X} with respect to $x$, obtaining the optimal distance matrix $\hat{x}$, and then embedding the data points into the $p$-dimensional Euclidean space based on $\hat{x}$, we obtain an algorithm for computing the MCE.

\subsection{Implementation Algorithm}

The optimization problem (\ref{eq:opt_X}) reformulated using Proposition~\ref{prop:rewrite_opt} corresponds to the computation of the geometric median in the feature space $\mathcal{X}$. This can be numerically solved using the Weiszfeld algorithm. 

As a concrete distance function $D: \mathcal{X}\times\mathcal{X} \to \mathbb{R}_{\geq 0}$, we use the Frobenius norm
\begin{align*}
    D(x_1,x_2)
    =
    \|x_1 - x_2\|_{F}, \quad \text{for }x_1,x_2 \in \mathcal{X} \subset \mathbb{R}^{n \times n}
    .
\end{align*}
The solution $\hat{x}$ can be visualized in $\mathbb{R}^2$ via MDS.
Based on these components, we construct a practical MCE algorithm as Algorithm~\ref{alg:median_embedding}.

The computational cost of the initial computation of the distance matrices for each embedding is $\mathcal{O}(m\, n^2)$.
In each iteration of the Weiszfeld algorithm, computing the Frobenius norm for each distance matrix requires $\mathcal{O}(m\, n^2)$ operations. Therefore, if $T$ iterations are required, the computational cost of this step is $\mathcal{O}(T\,m\, n^2)$.
Finally, the MDS step involves the eigen decomposition of an $n\times n$ matrix, which typically has a cost of $\mathcal{O}(n^3)$. 
Therefore, the total computational cost is $\mathcal{O}((T+1)\,m\, n^2 + n^3)$.

\begin{algorithm}[htbp]
\caption{Implementation of MCE}
\label{alg:median_embedding}
\begin{algorithmic}[1]
\REQUIRE Embeddings $y_1, y_2, \dots, y_m$ and a sufficiently small constant $\varepsilon > 0$
\ENSURE Optimal distance matrix $\hat{x}$ and the corresponding embedding $\hat{y}$
\FOR{$i=1$ \TO $m$}
    \FOR{$k=1$ \TO $n$}
        \FOR{$l=1$ \TO $n$}
            \STATE $X(y_i)_{(k,l)} \gets \| y_{i (k)} - y_{i (l)} \|$
        \ENDFOR
    \ENDFOR
\ENDFOR
\STATE $x^{(0)} \gets \frac{1}{m}\sum_{i=1}^{m}X(y_i)$
\FOR{$t=1$ \TO \text{maximum number of iterations}}
    \STATE $\displaystyle w_i \gets {1} / {(\| x^{(t-1)} - X(y_i)\|_F + \varepsilon)}$
    \STATE $\displaystyle x^{(t)} \gets {\sum_{i=1}^{m}w_i X(y_i)} / {\sum_{i=1}^{m}w_i}$
    \IF{the convergence criterion is satisfied}
        \STATE \textbf{break}
    \ENDIF
\ENDFOR
\STATE $\hat{x} \gets x^{(t)}$
\STATE $\hat{y} \gets$ $\mathrm{MultiDimensionalScaling}(\hat{x})$
\RETURN $\hat{x}$, $\hat{y}$
\end{algorithmic}
\end{algorithm}

\section{Illustration on Actual Data}
\label{sec:illustration}

We apply the MCE method, as implemented in Algorithm~\ref{alg:median_embedding}, to real datasets.
We first demonstrate rapid convergence and reduction in embedding instability as the number of randomly initialized embeddings increases.
We next investigate a combined approach with multiple imputation to address missing values.
Finally, we apply the method to embeddings obtained under multiscale hyperparameters in dimensionality reduction algorithms.

\subsection{Data}

We used two publicly available biological datasets described below.

\paragraph{ToxoLopit.}

The ToxoLopit dataset is a whole-cell spatial proteomics dataset derived from the hyperplexed localization of organelle proteins by isotope tagging (hyperLOPIT) experiment on \textit{Toxoplasma gondii} extracellular tachyzoites~\citep{barylyuk2020comprehensive}. 
We extracted $718$ proteins assigned to distinct subcellular structures and characterized by $30$-dimensional tandem mass tag (TMT)-labeled peptide profiles.

\paragraph{Embryoid body.}

The Embryoid body dataset consists of single-cell RNA sequencing measurements for human embryonic stem cells differentiating as embryoid bodies over a $27$-day time course~\cite{moon2019visualizing}.
The dataset contains $16,825$ cells profiled at multiple time points, where each cell is represented by a $17,580$-dimensional vector of gene expression counts.
We subsampled the dataset to $10\%$ of the original size, resulting in $1,682$ cells for the experiments.

\subsection{Methods}

\paragraph{Evaluation of Rapid Convergence and Stability.}

We first applied $t$-SNE to ToxoLopit data and UMAP to Embryoid body data to embed these data in two dimensions $1000$ times with different random initializations.
We then performed MCE using these $1000$ embeddings as inputs, denoting the results as $\hat{y}_{1000}$.
We then obtained $10$ different embeddings using $t$-SNE (ToxoLopit data) or UMAP (Embryoid body data) and MCE with $2,\,10,\,20,\,50,\,100$ embeddings.
Suppose $\{\hat{y}^{(1)},\ldots,\hat{y}^{(10)}\}$ denotes a set of $10$ different embeddings, we calculated the following quantities:
\begin{itemize}
    \item[(1)] mean distance to $\hat{y}_{1000}$:
    $
        \sum_{i=1}^{10} d(\hat{y}^{(i)},\hat{y}_{1000}) / 10
        ,
    $
    \item[(2)] mean pairwise distance between embeddings:
    $
        \sum_{1\leq i < j \leq 10} d(\hat{y}^{(i)},\hat{y}_{(j)}) / \binom{10}{2}
        .
    $
\end{itemize}

$t$-SNE was performed using scikit-learn with the $\mathrm{perplexity}=30$~\citep{pedregosa2011scikitlearn}.
The hyperparameter setting of UMAP was $\mathrm{n\_neighbors}=15$ and $\mathrm{min\_dist}=0.1$, where $\mathrm{n\_neighbors}$ is the size of the local neighbor data points and $\mathrm{min\_dist}$ defines the minimum separation between points in the embedding.
The initial values were generated as the default settings of the scikit-learn and umap-learn libraries.
For $t$-SNE, the initial values were drawn from a $2$-dimensional normal distribution $\mathcal{MVN}(\mathbf{0},\,10^{-4}\times I)$, where $\mathbf{0}$ denotes the zero vector and $I$ is the $2$-dimensional identity matrix.
For UMAP, the initial values of each axis were independently generated from a uniform distribution $\mathcal{U}(-10, 10)$.
The MCE was implemented in Python 3.9.0 using Algorithm~\ref{alg:median_embedding}.

We further evaluated the instability arising from MDS.
For each combination of data and embedding method, the MCE distance matrix was obtained with $50$ embeddings with different initializations.
We then conducted MDS $100$ times using the SMACOF algorithm.
For each execution, the final embedding was selected as the best among $4$ trials initialized independently from a uniform distribution $\mathcal{U}(0, 1)$ for each axis.
We evaluated the mean pairwise distance between the $100$ embeddings from MDS.

\paragraph{Combined Approach with Multiple Imputation.}

To evaluate the practical utility of our proposed method in scenarios involving missing values, we investigated an approach combining multiple imputation (MI) with MCE, similar to the combination of MI with consensus clustering~\citep{audigier2022clustering}.
We applied the approach to the ToxoLopit dataset.
As the original ToxoLopit dataset is complete, we artificially introduced missing values to simulate realistic data analysis scenarios.
We investigated two missingness mechanisms:
\begin{itemize}
    \item[(1)] {random missingness}: values were deleted with a uniform probability, simulating a missing completely at random (MCAR) mechanism.
    \item[(2)] {intensity-dependent missingness}: values with lower intensity were more likely to be missing, simulating a missing not at random (MNAR) mechanism often observed in proteomics.
    The details of the generation of missing values are described below.
\end{itemize}
In the intensity-dependent missingness scenario, we defined a threshold $\tau \in \mathbb{R}$ corresponding to the $30$th percentile of the entire data distribution.
The probability $\pi_{ij} \in [0,1]$ that an entry $x_{ij} \in \mathbb{R}$ was missing was determined by a step function dependent on the target missing rate $\varrho \in [0,1]$:
\begin{align*}
    \pi_{ij}
    =
    \mathrm{Pr}(\phi_{ij}=1 \mid x_{ij})
    =
    \begin{cases}
    \min(1, 2\varrho), & \quad \text{if}\quad x_{ij} < \tau
    ,
    \\
    \varrho / 2, & \quad\text{if}\quad x_{ij} \geq \tau
    ,
    \end{cases}
\end{align*}
where $\phi_{ij} \in \{0,1\}$ denotes the missing indicator.
This formulation ensures that values with lower intensity are significantly more likely to be missing, thereby simulating an MNAR mechanism often observed in proteomics due to detection limits.
For each mechanism, we used missing rates of $\varrho \in \{0.1,0.3\}$.

For the imputation step, we employed the multivariate imputation by chained equations (MICE) algorithm implemented as the IterativeImputer function in scikit-learn.
We used the Bayesian ridge regression models for developing imputation models.
We sampled imputation values from posterior distributions truncated with the maximum and minimum of the observed values ($\mathrm{sample\_posterior}=\mathrm{True}$), and we generated $m=50$ imputed datasets for each missing data scenario.
Subsequently, we applied $t$-SNE to each of the $50$ imputed datasets with $\mathrm{perplexity} =30$.
The initial values for $t$-SNE were drawn from $\mathcal{MVN}(\mathbf{0},\,10^{-4}\times I)$ as in the previous experiments.
We then computed the MCE from these $50$ embeddings to obtain a single consensus embedding.

This procedure was repeated $20$ times for each missingness scenario (combination of mechanism and rate).
We further performed $1000$ independent $t$-SNE runs on the original complete ToxoLopit data and performed MCE to obtain the base embedding: $\hat{y}_{1000}$.
The performance was evaluated by calculating the mean distance to $\hat{y}_{1000}$: $\sum_{i=1}^{20} d(\hat{y}_{\mathrm{MI}}^{(i)},\hat{y}_{1000}) / 10$, where $\hat{y}_{\mathrm{MI}}^{(i)}$ denotes the consensus embedding obtained from $i$th repetition.

\begin{remark}
    Multiple imputation is a 2-step procedure: constructing a posterior predictive distribution for the missing values and imputing them via random sampling.
    Let $\mu_{\mathrm{imp}}$ denote the probability measure on the embedding space $\tilde{\mathcal{Y}}$ induced by the variability of these draws from the predictive distribution.
    Additionally, let $\mu_{\mathrm{init}}$ denote the probability measure associated with the random initialization of the $t$-SNE algorithm.
    Since the imputation and initialization processes are independent, the stochastic behavior of the resulting embeddings in this combined framework is governed by the product measure $\mu = \mu_{\mathrm{imp}} \times \mu_{\mathrm{init}}$.
    Therefore, Theorem~\ref{thm:main} can be applied to this setting.
\end{remark}

\paragraph{Consensus Representation of Multiscale Hyperparameter.}

We further investigated the capability of MCE to integrate embeddings generated with different hyperparameters into a single multiscale consensus representation.
The $\mathrm{perplexity}$ parameter in $t$-SNE is interpreted as it controls the effective number of nearest neighbors.
While configuring a small $\mathrm{perplexity}$ value preserves the local structure of the dataset, employing a larger value tends to reflect the global structure to the resulting embeddings.
As the $t$-SNE is sensitive to the perplexity value, determining the value is a practical issue.
Multiscale approaches may be a solution to this problem.

In this illustration, we applied $t$-SNE to the ToxoLopit data using a set of perplexity values $\{10, 30, 90, 270\}$.
For each perplexity value, we generated $20$ independent embeddings with different random initializations.
Consequently, $m = 80$ embeddings ($4$ perplexity settings $\times$ $20$ runs) were obtained, and we applied MCE.

\subsection{Results}

\paragraph{Evaluation of Rapid Convergence and Stability.}

Figure~\ref{fig:mce_1000} shows visualizations of $\hat{y}_{1000}$.
In these figures, the color of each point corresponds to its label: assigned subcellular structures for ToxoLopit data and time points for Embryoid body data.

For ToxoLopit data, the mean distances to $\hat{y}_{1000}$ were $4.76$ (SD: $1.28$; single $t$-SNE), $4.37$ (SD: $1.11$; MCE with $2$ $t$-SNE embeddings), $1.53$ (SD: $0.67$; MCE with $10$ $t$-SNE embeddings), $1.28$ (SD: $0.39$; MCE with $20$ $t$-SNE embeddings), $0.797$ (SD: $0.199$; MCE with $50$ $t$-SNE embeddings), and $0.570$ (SD: $0.166$; MCE with $100$ $t$-SNE embeddings).
The mean pairwise distances were $6.04$ (SD: $2.87$; single $t$-SNE), $6.18$ (SD: $1.66$; MCE with $2$ $t$-SNE embeddings), $2.23$ (SD: $0.95$; MCE with $10$ $t$-SNE), $1.84$ (SD: $0.65$; MCE with $20$ $t$-SNE embeddings), $1.06$ (SD: $0.32$; MCE with $50$ $t$-SNE embeddings), and $0.681$ (SD: $0.220$; MCE with $100$ $t$-SNE embeddings).
For Embryoid body data, the mean distances to $\hat{y}_{1000}$ were $3.06$ (SD: $1.45$; single UMAP), $2.41$ (SD: $1.87$; MCE with $2$ UMAP embeddings), $0.810$ (SD: $0.067$; MCE with $10$ UMAP embeddings), $0.567$ (SD: $0.051$; MCE with $20$ UMAP embeddings), $0.383$ (SD: $0.051$; MCE with $50$ UMAP embeddings), and $0.273$ (SD: $0.020$; MCE with $100$ UMAP embeddings).
The mean pairwise distances were $4.49$ (SD: $1.73$; single UMAP), $3.67$ (SD: $2.28$; MCE with $2$ UMAP embeddings), $1.16$ (SD: $0.10$; MCE with $10$ UMAP), $0.788$ (SD: $0.050$; MCE with $20$ UMAP embeddings), $0.526$ (SD: $0.058$; MCE with $50$ UMAP embeddings), and $0.359$ (SD: $0.025$; MCE with $100$ UMAP embeddings).
These values are plotted in Figure~\ref{fig:emb_inst}.

These results indicate a rapid convergence and reduction in embedding instability for the MCE with the number of integrated embeddings.
Both evaluation metrics decreased sharply up to $m=10$, after which the rate of decrease gradually declined.
Consequently, using $m=10$ or $m=20$ may be sufficient for achieving adequate integration stability.

In addition, we assessed the instability of the MDS in the final step of our algorithm.
For the ToxoLopit data, based on the fixed consensus distance matrix derived from $m=50$ $t$-SNE embeddings, the mean pairwise distance among $100$ independent MDS runs was $0.316$ (SD: $0.232$).
Similarly, for the Embryoid body data with UMAP, the corresponding mean pairwise distance was $1.27$ (SD: $0.284$).
These values represent the variability arising from the stochastic initialization of the SMACOF algorithm, but it is relatively small.

\begin{figure}[htbp]
\centering
    \begin{subfigure}[b]{0.50\linewidth}
      \centering
      \includegraphics[width=\linewidth]{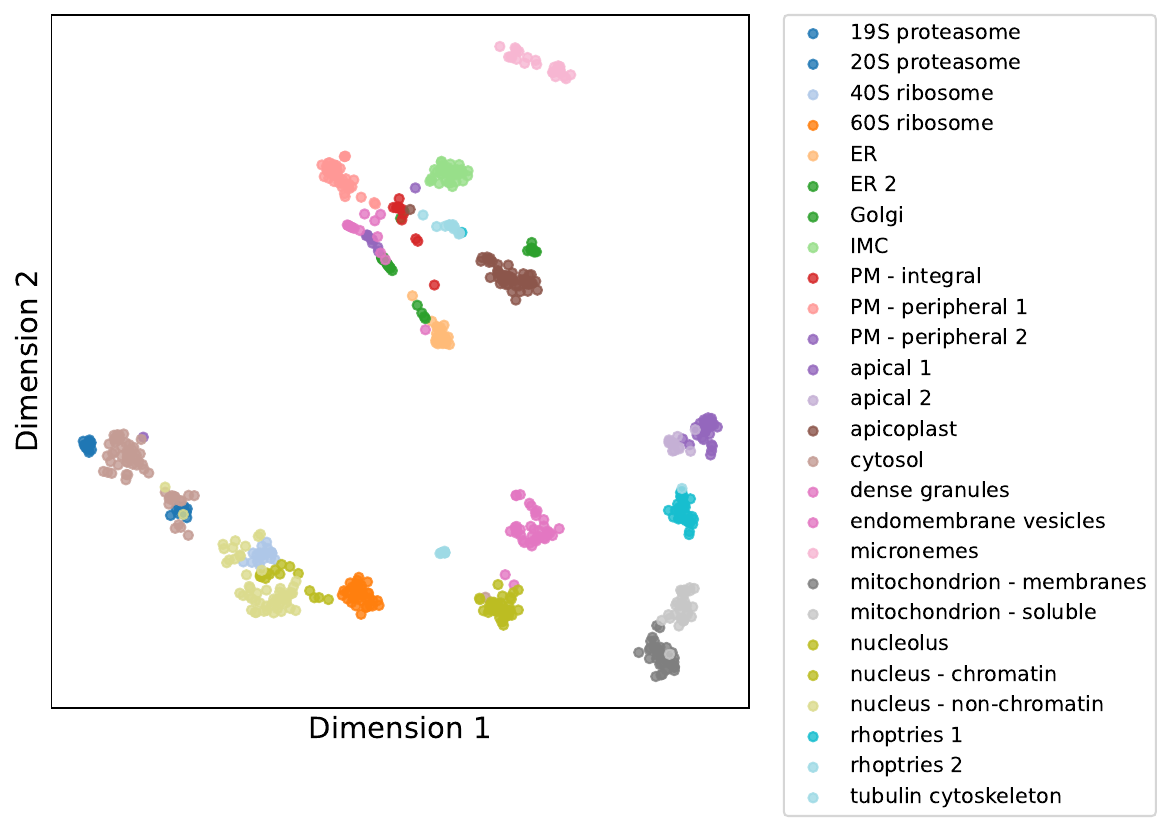}
      \caption{ToxoLopit.}
    \end{subfigure}
    \hspace{0.5em}
    \begin{subfigure}[b]{0.45\linewidth}
      \centering
      \includegraphics[width=\linewidth]{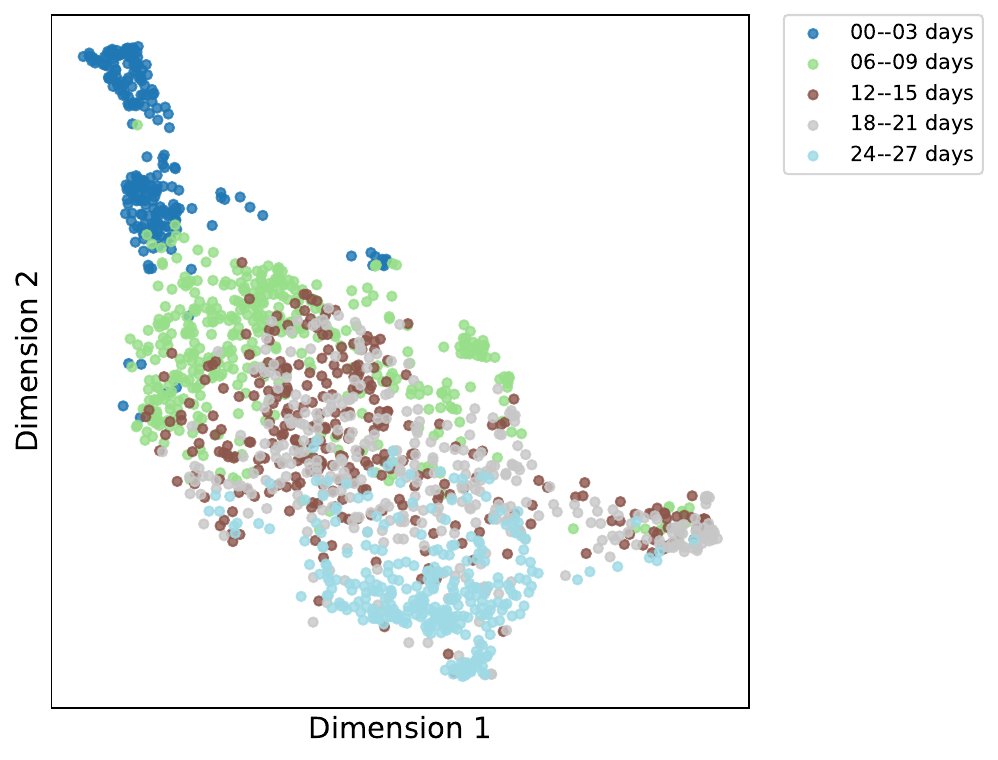}
      \caption{Embryoid body.}
    \end{subfigure}
\caption{Visualization of the embedding of the datasets obtained by the MCE with $1000$ embeddings. The left column (a) shows the results obtained using ToxoLopit data and $t$-SNE, and the right column (b) shows the results obtained using Embryoid body data and UMAP.}
\label{fig:mce_1000}
\end{figure}

\begin{figure}[htbp]
  \centering
  \begin{subfigure}[b]{1\linewidth}
      \centering
      \begin{subfigure}[b]{0.48\linewidth}
          \centering
          \includegraphics[width=\linewidth]{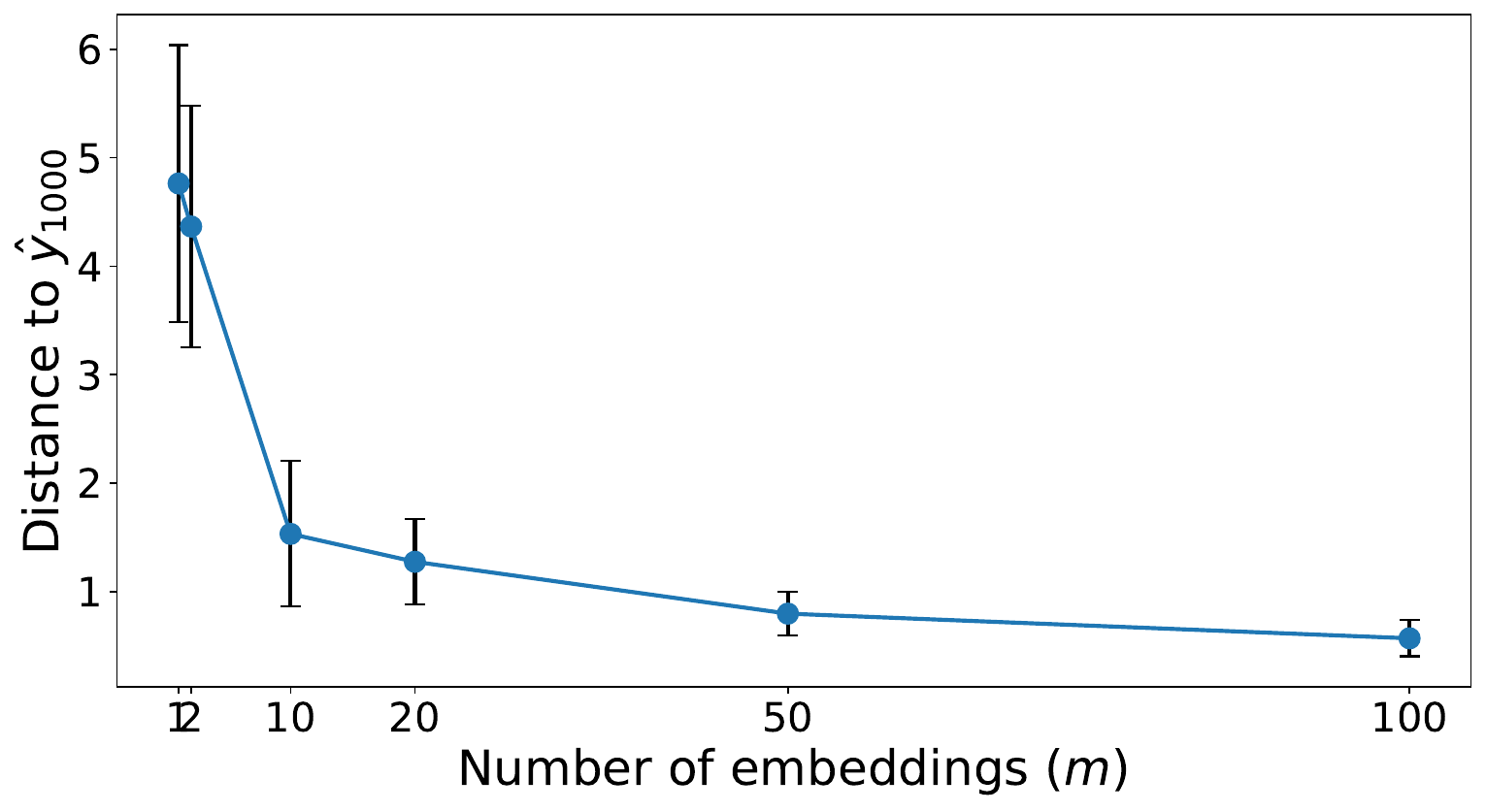}
      \end{subfigure}
      \begin{subfigure}[b]{0.48\linewidth}
          \centering
          \includegraphics[width=\linewidth]{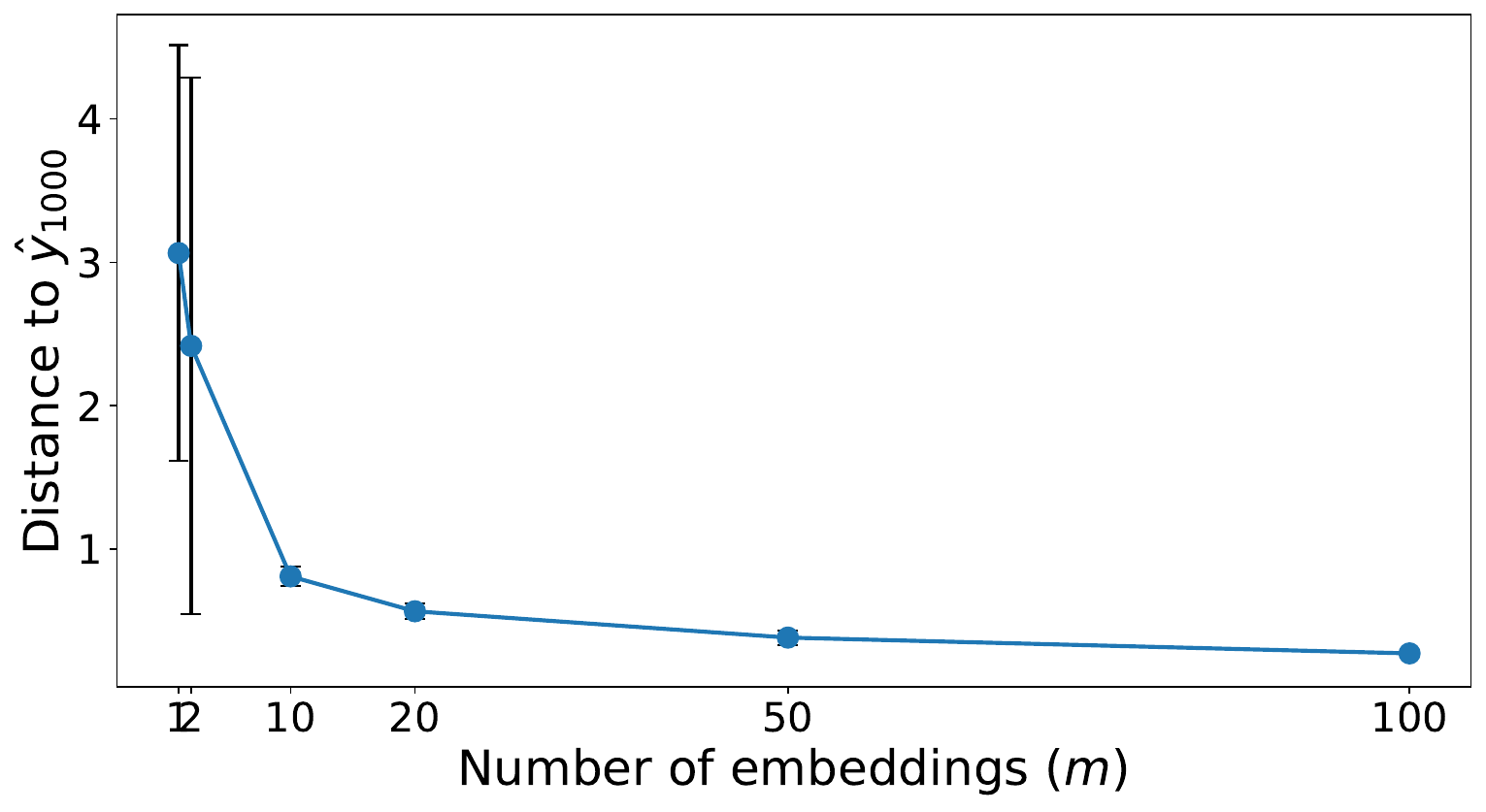}
      \end{subfigure}
  \end{subfigure}
  \\
  \vspace{0.5em}
  \begin{subfigure}[b]{1\linewidth}
      \centering
      \begin{subfigure}[b]{0.48\linewidth}
          \centering
          \includegraphics[width=\linewidth]{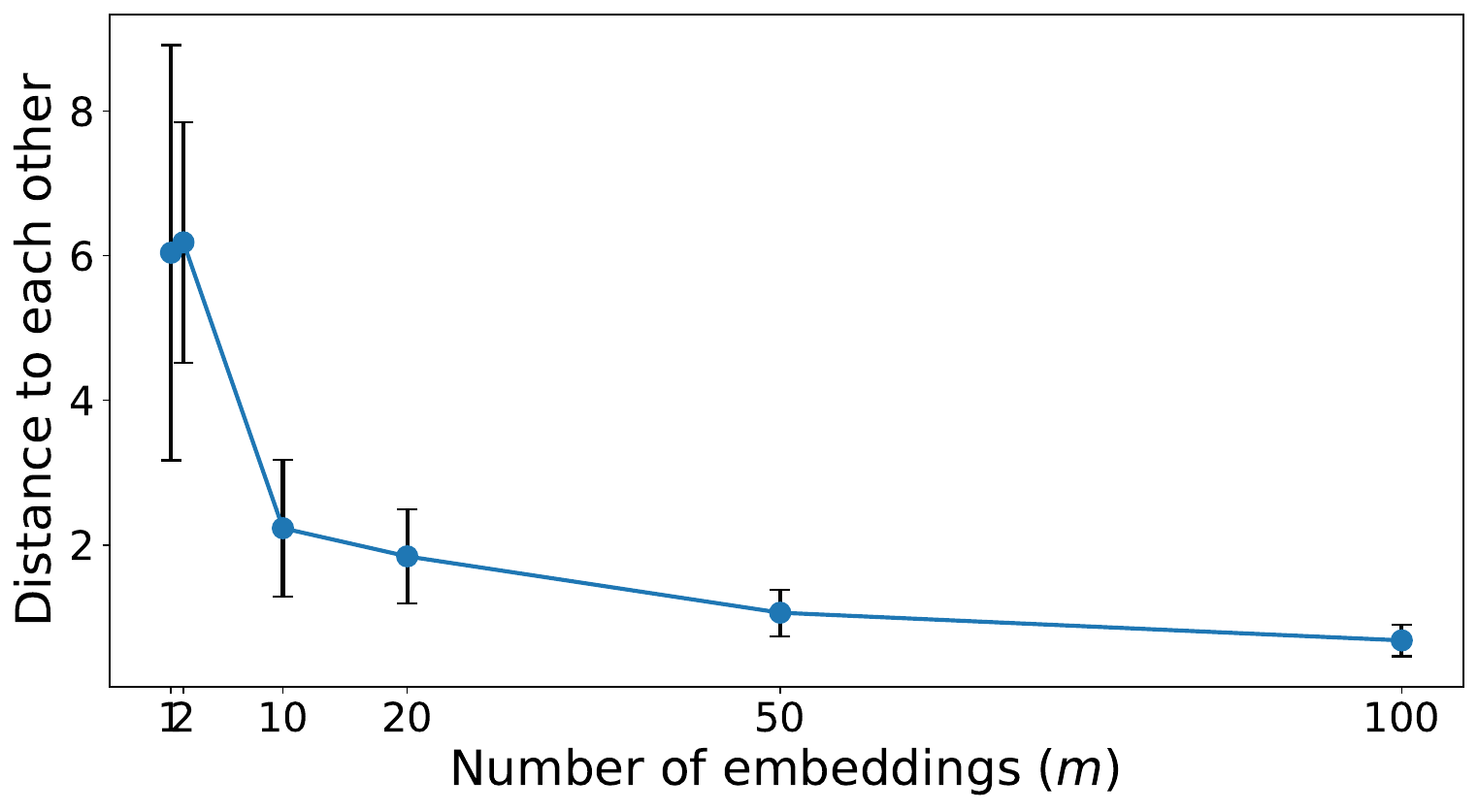}
      \subcaption{ToxoLopit.}
      \end{subfigure}
      \begin{subfigure}[b]{0.48\linewidth}
          \centering
          \includegraphics[width=\linewidth]{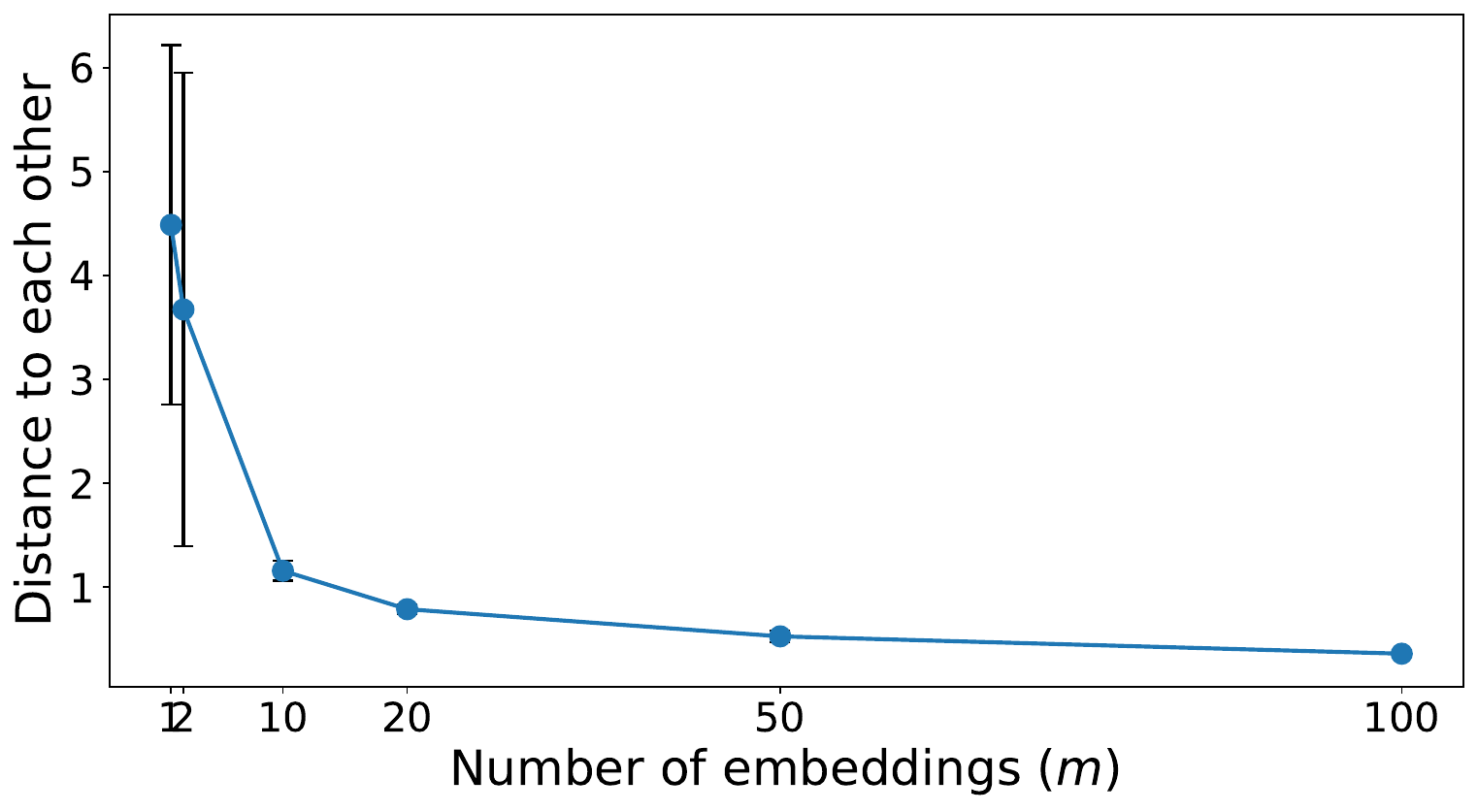}
      \subcaption{Embryoid body.}
      \end{subfigure}
  \end{subfigure}
 \caption{Mean distance to $\hat{y}_{1000}$ and mean pairwise distance among embeddings for $t$-SNE embeddings and MCE embeddings ($m=2,\,10,\,20,\,50,\,100$). The results are shown with error bars indicating standard deviations (SD). The left column (a) shows the results obtained using ToxoLopit data and $t$-SNE, and the right column (b) shows the results obtained using Embryoid body data and UMAP.}
 \label{fig:emb_inst}
\end{figure}

\paragraph{Combined Approach with Multiple Imputation.}

Figure~\ref{fig:mi} shows visualizations of the MCEs of the embeddings from multiple imputed datasets.
Visually, the results for the $10\%$ missing rate closely reproduced the original configuration observed in the complete data, preserving clear cluster separations.
For these scenarios, the mean distances to $\hat{y}_{1000}$ were $1.25$ (SD: $0.30$) for the random missingness case and $1.28$ (SD: $0.25$) for the intensity-dependent missingness case.
In the case of the $30\%$ missing rate, while the embeddings appeared slightly blurred, possibly due to the uncertainty associated with imputation, they still retained the overall topological structure and tendency of the original representation.
In these scenarios, the mean distances increased to $2.35$ (SD: $0.34$) for the random missingness case and $2.68$ (SD: $0.40$) for the intensity-dependent missingness case.
Despite the higher missing rate causing a relative deviation from the baseline, the proposed approach using MCE with multiple imputations consistently yielded stable embeddings that captured the underlying structures under both missingness mechanisms.

\begin{figure}[htbp]
\centering
    \begin{subfigure}[b]{0.30\linewidth}
      \centering
      \includegraphics[width=\linewidth]{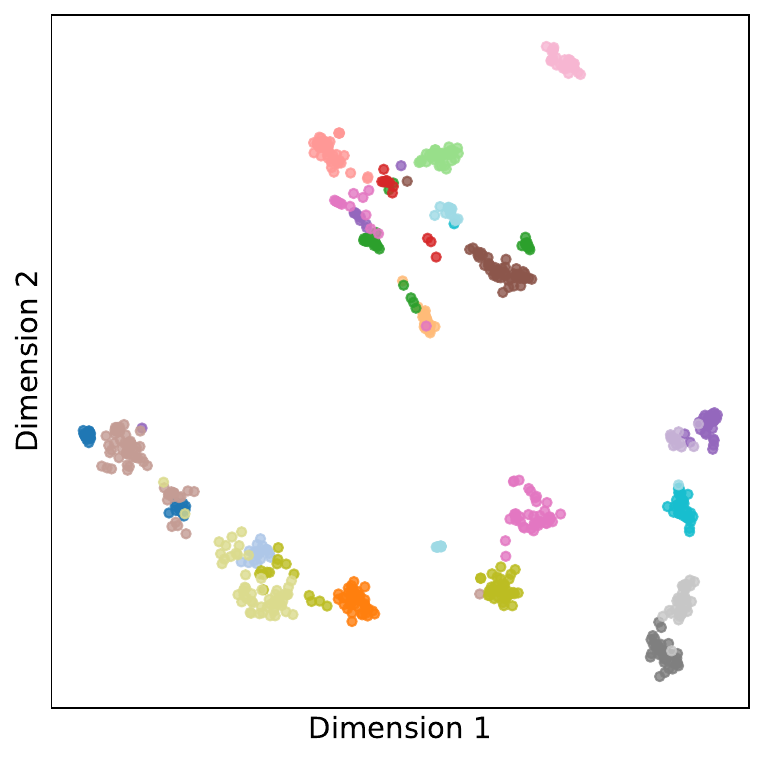}
      \caption{Random, 10\%.}
    \end{subfigure}
    \hspace{0.5em}
    \begin{subfigure}[b]{0.30\linewidth}
      \centering
      \includegraphics[width=\linewidth]{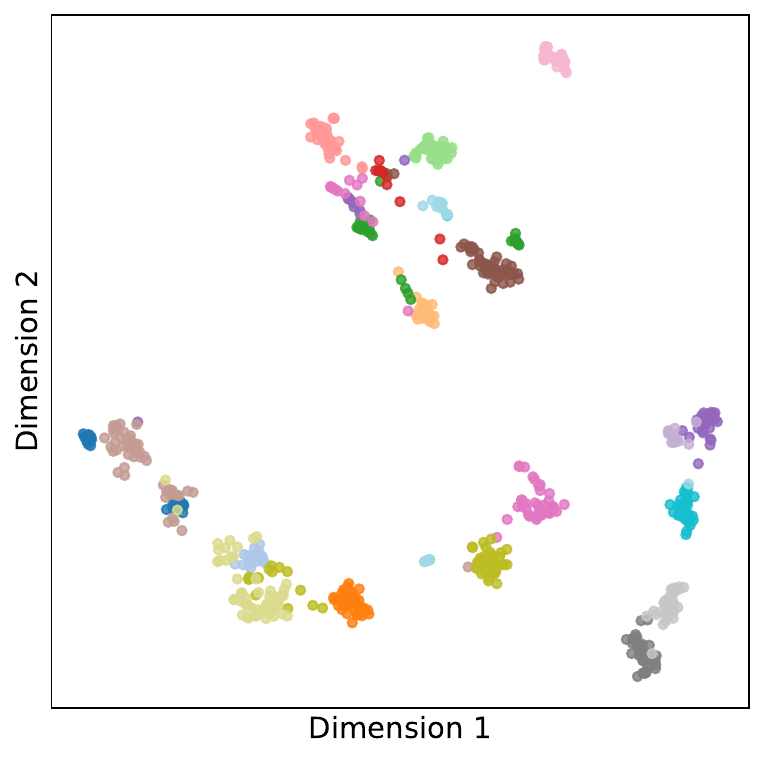}
      \caption{Intensity-dependent, 10\%.}
    \end{subfigure}
    \\
    \vspace{0.5em}
    \begin{subfigure}[b]{0.30\linewidth}
      \centering
      \includegraphics[width=\linewidth]{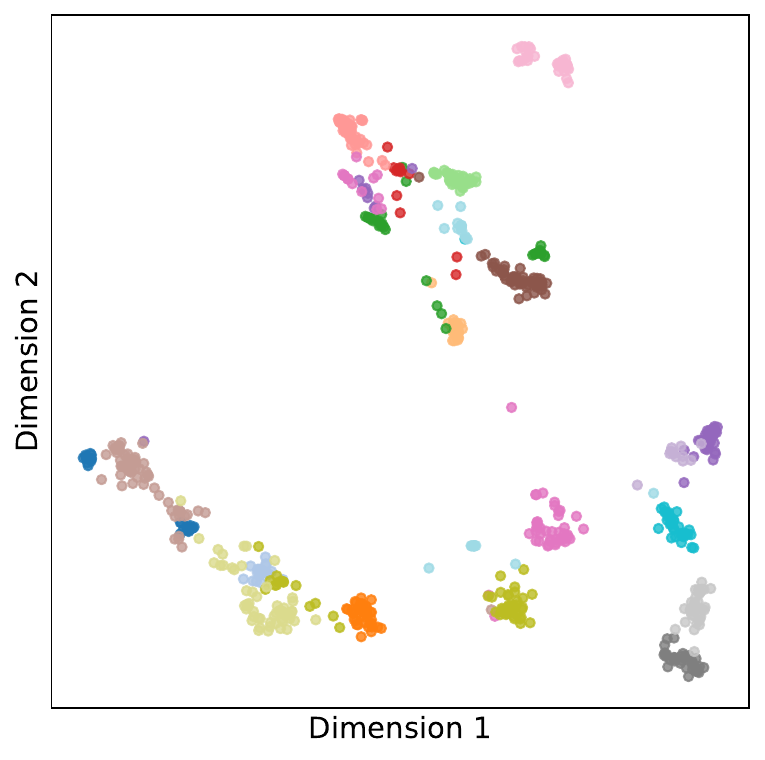}
      \caption{Random, 30\% .}
    \end{subfigure}
    \hspace{0.5em}
    \begin{subfigure}[b]{0.30\linewidth}
      \centering
      \includegraphics[width=\linewidth]{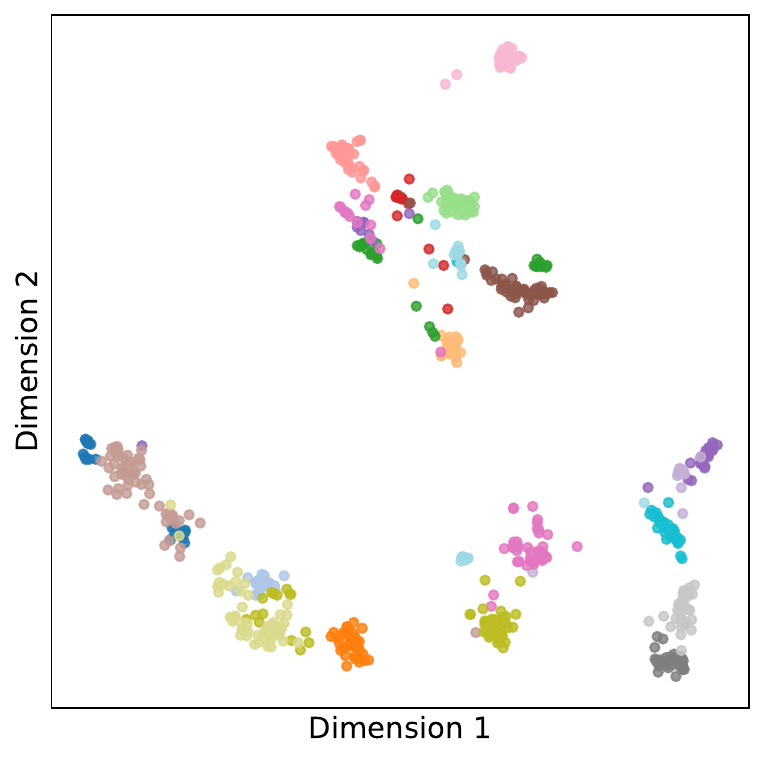}
      \caption{Intensity-dependent, 30\%.}
    \end{subfigure}
\caption{Visualization of the embeddings of the combined approach with multiple imputation applied to ToxoLopit data. (a) Random missing scenario with $10\%$ missing rate. (b) Intensity-dependent missing scenario with $10\%$ missing rate. (c) Random missing scenario with $30\%$ missing rate. (d) Intensity-dependent missing scenario with $30\%$ missing rate.}
\label{fig:mi}
\end{figure}

\paragraph{Consensus Representation of Multiscale Hyperparameter.}

Figure~\ref{fig:multiscale} presents the visual comparison between the MCE and representative embeddings for each perplexity setting based on ToxoLopit data.
Individual $t$-SNE results varied largely with the hyperparameter.
The embedding with $\mathrm{perplexity}=10$ exhibited fragmented local clusters with less coherent global organization, whereas the embedding with $\mathrm{perplexity}=270$ retained the global structure, resulting in a more continuous but less detailed cluster separation.
The MCE result (Figure~\ref{fig:mi}(a)) successfully synthesized these multiscale features.
In other words, the MCE maintained the global layout of high-perplexity embeddings while preserving the distinct local clustering structure observed at moderate perplexities.
The mean distances from the consensus embedding to the individual embeddings were $15.65$ (SD: $1.90$) for $\mathrm{perplexity}=10$, $7.90$ (SD: $0.31$) for $\mathrm{perplexity}=30$, $6.03$ (SD: $0.65$) for $\mathrm{perplexity}=90$, and $2.28$ (SD: $0.04$) for $\mathrm{perplexity}=270$.
The distance to the consensus decreased as the perplexity increased.
However, unlike the simple high-perplexity result, the MCE integrated information from lower scales to refine the boundaries of local clusters, achieving a balanced representation.

\begin{figure}[htbp]
\centering
    \begin{subfigure}[b]{0.30\linewidth}
      \centering
      \includegraphics[width=\linewidth]{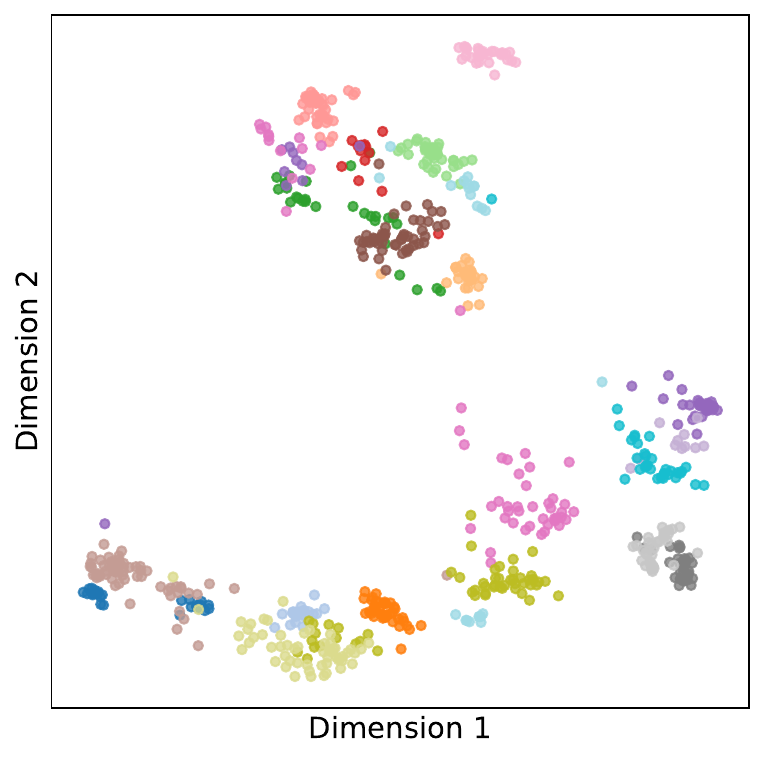}
      \caption{MCE.}
    \end{subfigure}
    \hspace{0.5em}
    \begin{subfigure}[b]{0.30\linewidth}
      \centering
      \includegraphics[width=\linewidth]{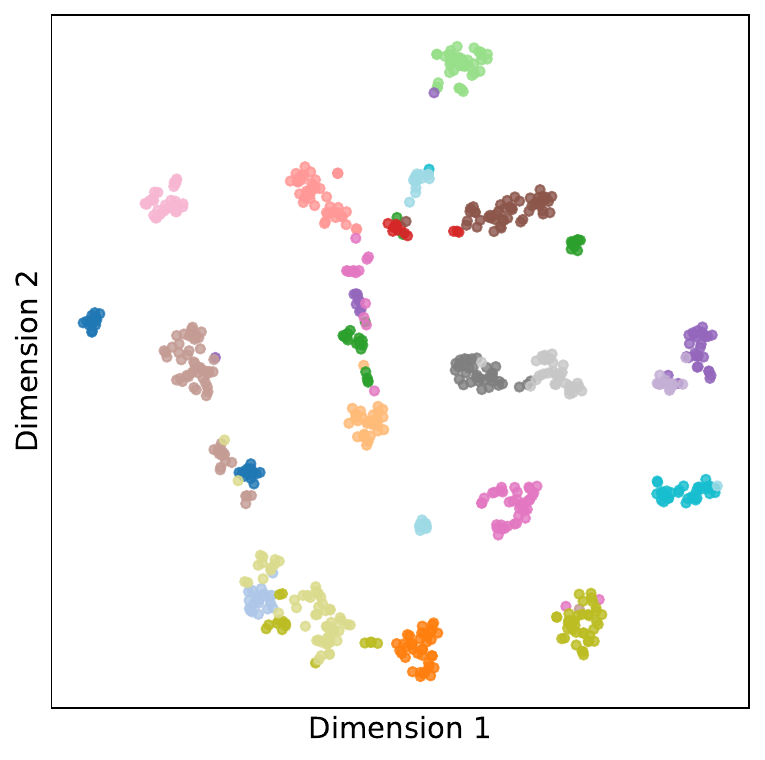}
      \caption{$\text{Perplexity}=10$.}
    \end{subfigure}
    \hspace{0.5em}
    \begin{subfigure}[b]{0.30\linewidth}
      \centering
      \includegraphics[width=\linewidth]{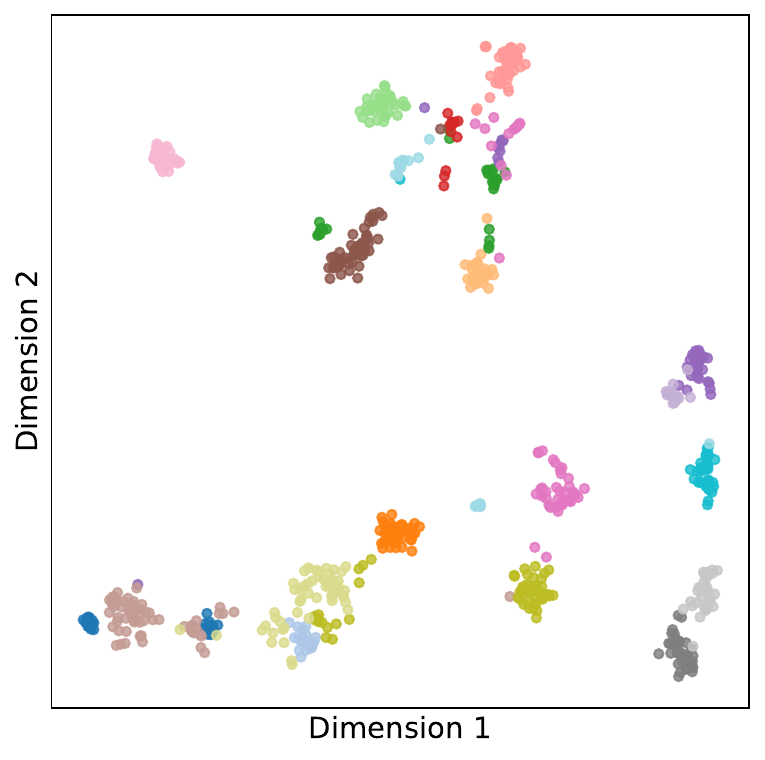}
      \caption{$\text{Perplexity}=30$.}
    \end{subfigure}
    \\
    \vspace{0.5em}
    \begin{subfigure}[b]{0.30\linewidth}
      \centering
      \includegraphics[width=\linewidth]{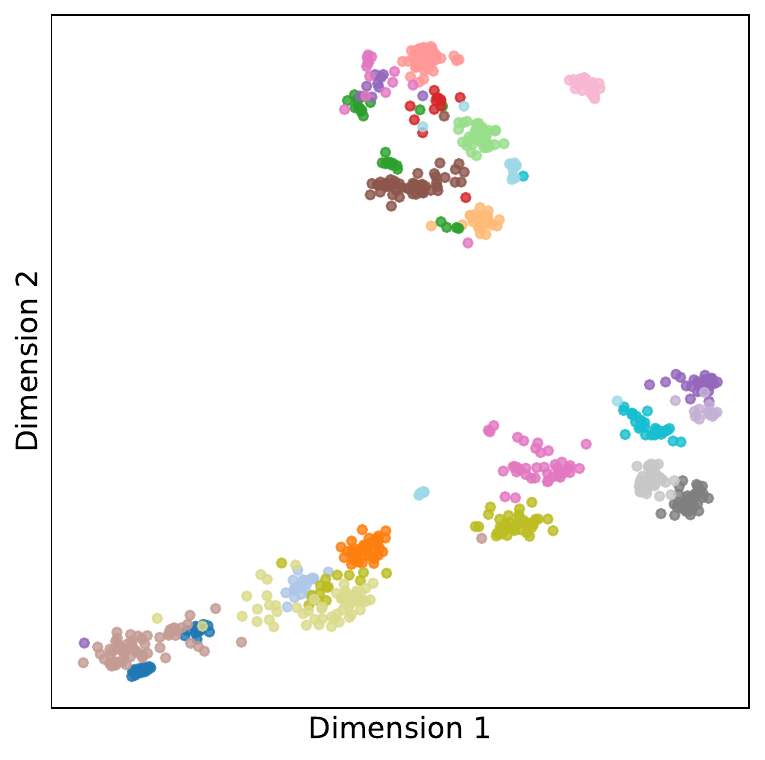}
      \caption{$\text{Perplexity}=90$.}
    \end{subfigure}
    \hspace{0.5em}
    \begin{subfigure}[b]{0.30\linewidth}
      \centering
      \includegraphics[width=\linewidth]{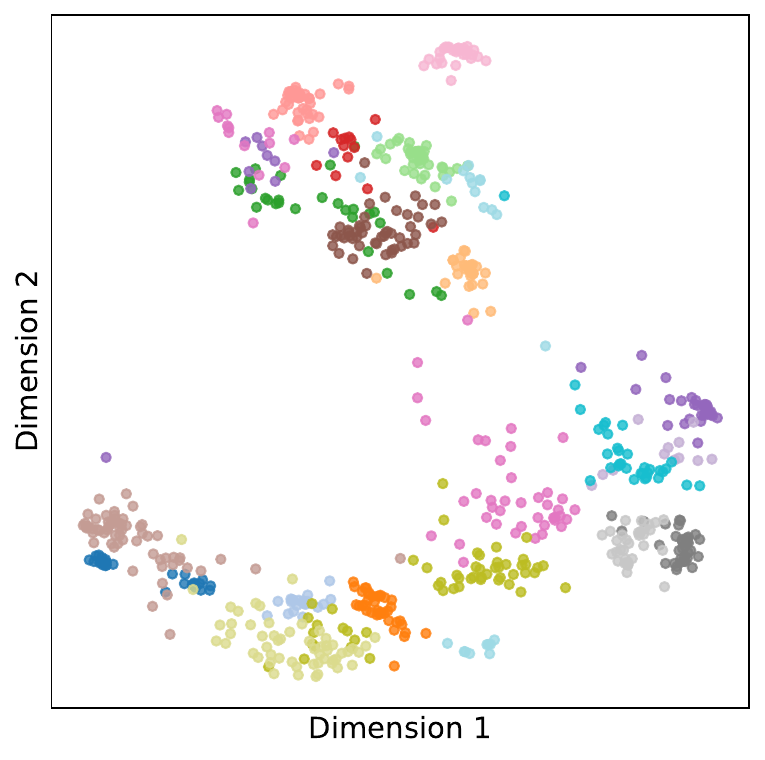}
      \caption{$\text{Perplexity}=270$.}
    \end{subfigure}
\caption{Visualization of the embeddings by $t$-SNE with multiple perplexity values and the MCE applied to ToxoLopit data. (a) MCE of the specified perplexity settings. (b) Embedding with $\text{Perplexity}=10$. (c) Embedding with $\text{Perplexity}=30$. (d) Embedding with $\text{Perplexity}=90$. (e) Embedding with $\text{Perplexity}=270$.}
\label{fig:multiscale}
\end{figure}

\section{Discussion}
\label{sec:discussion}

In this study, we proposed an MCE method for integrating multiple embeddings. 
By modeling them as i.i.d. samples from a probability measure and applying large deviation theory, our approach achieved consistency at an exponential rate.
Furthermore, we constructed a practical algorithm based on the Frobenius norm of the pairwise distance matrix of the data points.
By applying this method to real data, we demonstrated that embedding variability decreases as the number of embeddings $m$ increases.
We further demonstrated that the MCE can be combined with multiple imputations to address the problem of missing values and that the MCE produces a multiscale consensus embedding.
These results indicate that the consensus-embedding framework using a geometric median can effectively mitigate the instability of embeddings caused by random initialization or other sources.

Some approaches to generate multiscale embeddings have been developed.
For example, the multiscale $t$-SNE has been proposed to address this issue by averaging the similarity probability models of data points in high-dimensional space.
However, it does not address issues related to parameter settings in the optimization process~\citep{lee2015multi}.
Our approach addresses both these issues and is applicable to any dimensionality reduction method that has hyperparameters.

To apply our theoretical results, we must consider the following assumptions:
In Assumption~\ref{assump:trueembedding}, we define the distance space on $\tilde{\mathcal{Y}}$ rather than $\mathcal{Y}$ to ensure invariance under translation, scaling, and orthogonal transformations (rotations and reflections).
Therefore, if we assume the continuity of $\mu(y)$, it follows that almost surely,
\begin{align*}
    \int_{\tilde{\mathcal{Y}}} d(y',y) d\mu(y') \neq \int_{\tilde{\mathcal{Y}}} d(y',y^{*}) d\mu(y')
    ,
\end{align*}
when $y \neq y^{*}$.
Assumption~\ref{assump:mgf} may not hold if the underlying distribution of $Z_{y}(y_i)$ has a heavy tail.

The definition of true embedding used in this study differs from that in~\cite{viswanath2012consensus}.
In this study, we define "true" as the minimization of the expected distance regarding a probability measure.
Although this formulation is natural from a statistical perspective, it does not guarantee improvements in embedding evaluation metrics.
Some metrics for low-dimensional embeddings evaluate local neighborhood structure preservation, such as the average K-ary neighborhood preservation and co-k-nearest neighbor size, whereas others assess the cluster structure using label information, such as the silhouette coefficient~\citep{lee2009quality, zhang2021pydrmetrics, rousseeuw1987silhouettes}.
Future work will explore the practical properties of MCE across various datasets and evaluation metrics.

Although MCE can reduce the variability caused by random initialization, the results still depend on the probabilistic model used to generate the initial values.
Nonlinear dimensionality reduction methods typically generate low-dimensional embeddings that preserve certain structural properties of high-dimensional space.
For example, $t$-SNE and UMAP prioritize the local neighborhood structure preservation of each data point.
Consequently, global structures are typically lost in the resulting embeddings.
However, studies suggest that selecting appropriate initial values can enhance global structure preservation~\citep{kobak2019art}.
Investigating better initialization models remains a challenge for future research.

While we formulated our consensus approach using the geometric median of embeddings, other definitions of centrality are conceivable.
For instance, one could adopt the geometric mean (or Fr\'echet mean), defined as the minimizer of the mean squared loss rather than the sum of unsquared distances used in this study.
Although we selected the median owing to its robustness to outliers, a comparative analysis of these objective functions and their impact on the consensus embedding remains a topic for future research~\citep{minsker2015geometric}.

In conclusion, we proposed a general framework that yields a consensus embedding from multiple low-dimensional embeddings obtained from dimensionality reduction methods.
The proposed method is constructed based on a metric defined on an embedding space, providing a geometric formulation and a statistical guarantee on exponential-rate consistency.
Our method can address the instability of embeddings arising from random initialization, imputation of missing values, and the specification of hyperparameters and is expected to enhance the reliability of high-dimensional data analysis.

\section*{Acknowledgement}

We would like to thank Editage (www.editage.jp) for English language editing.

\section*{Disclosure statement}

The authors declare no conflicts of interest.

\section*{Code Availability}

The Python implementation of MCE is available from the GitHub repository \url{https://github.com/t-yui/MedianConsensusEmbedding}.

\section*{Data Availability}

The ToxoLopit dataset is available from the GitHub repository \url{https://github.com/lgatto/pRolocdata}.
The Embryoid body dataset is available from the GitHub repository \url{https://github.com/KrishnaswamyLab/PHATE}.

\renewcommand{\thesection}{\Alph{section}}
\setcounter{section}{0}

\section{Proof of Theorem~\ref{thm:main}}

We first establish the following lemma.

\begin{lemma}
\label{lem:technical_lemma_coverage}
For any $\varepsilon > 0$, there exist $N \in \mathbb{N}$ and the points 
$
    y^{(1)},\ldots,y^{(N)} \in \{ y \in \tilde{\mathcal{Y}} : d(y,y^*) \geq \epsilon \}
$
such that
\begin{align*}
    \mathrm{Pr}\left( d(\hat{y},y^*) \geq \epsilon \right)
    \leq \sum_{j=1}^N \mathrm{Pr}\left( \hat{S}^m_{y^{(j)}} \geq 0 \right)
    .
\end{align*}
\end{lemma}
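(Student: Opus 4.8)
The plan is to reduce the uncountable ``bad'' event $\{d(\hat{y},y^*)\ge\epsilon\}$ to a finite union of large-deviation events indexed by the centers of a finite net, exploiting the compactness of $\tilde{\mathcal{Y}}$ together with a Lipschitz bound that is uniform in the random sample. Throughout I take $\hat{S}^m_y := \tfrac{1}{m}\sum_{i=1}^m Z_y(y_i) = \tfrac{1}{m}\sum_{i=1}^m\{d(y_i,y^*)-d(y_i,y)\}$ and write $A_\epsilon := \{y\in\tilde{\mathcal{Y}} : d(y,y^*)\ge\epsilon\}$. The first step is the minimizer identity: since $\hat{y}$ minimizes $\tfrac{1}{m}\sum_i d(y_i,\cdot)$ over $\tilde{\mathcal{Y}}$, comparing its value with the value at $y^*$ gives $\tfrac{1}{m}\sum_i d(y_i,\hat{y})\le\tfrac{1}{m}\sum_i d(y_i,y^*)$, that is, $\hat{S}^m_{\hat{y}}\ge 0$ deterministically. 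Hence on $\{d(\hat{y},y^*)\ge\epsilon\}$ the minimizer lies in $A_\epsilon$ and satisfies $\hat{S}^m_{\hat{y}}\ge 0$, so this event is contained in $\{\sup_{y\in A_\epsilon}\hat{S}^m_y\ge 0\}$. Because $A_\epsilon$ is closed inside the compact space $\tilde{\mathcal{Y}}$ (the remark establishing compactness of $\tilde{\mathcal{Y}}$), it is itself compact and this supremum is attained.

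The crucial structural fact I would establish next is that $y\mapsto\hat{S}^m_y$ is $1$-Lipschitz uniformly in the data: for any $y,y'$, the triangle inequality gives $|\hat{S}^m_y-\hat{S}^m_{y'}|\le\tfrac{1}{m}\sum_i|d(y_i,y)-d(y_i,y')|\le d(y,y')$. The key point is that the Lipschitz constant does not depend on $y_1,\dots,y_m$, so the discretization error introduced below is deterministic; in particular the number $N$ of net points does not grow with $m$, which is exactly what keeps the resulting union bound compatible with an exponential rate. By total boundedness of the compact set $A_\epsilon$, for any radius $r>0$ I can select finitely many points $y^{(1)},\dots,y^{(N)}\in A_\epsilon$ with $A_\epsilon\subseteq\bigcup_{j=1}^N B(y^{(j)},r)$. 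If the minimizer $\hat{y}\in A_\epsilon$ with $\hat{S}^m_{\hat{y}}\ge 0$ falls into $B(y^{(j)},r)$, the Lipschitz bound transfers this to the center as $\hat{S}^m_{y^{(j)}}\ge\hat{S}^m_{\hat{y}}-r\ge -r$, and a union bound over $j$ then yields an inequality of the stated form.

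The delicate point I expect to be the main obstacle is matching the threshold on the right-hand side: the net argument only transfers $\hat{S}^m_{\hat{y}}\ge 0$ to $\hat{S}^m_{y^{(j)}}\ge -r$, losing the radius $r$. To ensure each event $\{\hat{S}^m_{y^{(j)}}\ge -r\}$ remains a genuine large deviation (hence exponentially small in the subsequent lemma), I would invoke Assumption~\ref{assump:trueembedding}. The population criterion $y\mapsto\mathbb{E}_{\mu}[Z_y(y_i)]$ is continuous, indeed $1$-Lipschitz by the same triangle-inequality argument, and strictly negative on $A_\epsilon$ since every point there differs from $y^*$; by compactness it attains a uniform gap $\gamma := -\sup_{y\in A_\epsilon}\mathbb{E}_{\mu}[Z_y(y_i)] > 0$. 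Choosing the net radius strictly below this gap, say $r=\gamma/2$, guarantees that at every center the threshold $-r$ still sits strictly above the negative mean $\mathbb{E}_{\mu}[Z_{y^{(j)}}(y_i)]\le-\gamma$, so $\{\hat{S}^m_{y^{(j)}}\ge -r\}$ is a deviation of size at least $\gamma/2$ away from the mean and will be controlled by the moment generating function of Assumption~\ref{assump:mgf} in the next step. This gap-versus-radius balance is the heart of the argument; depending on the precise normalization fixed in the definition of $\hat{S}^m_y$, the retained margin $r$ is either absorbed into the threshold so that the centers satisfy the displayed inequality $\hat{S}^m_{y^{(j)}}\ge 0$, or it is carried forward as the strictly separated effective level that feeds the Chernoff bound. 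The remaining bookkeeping, namely verifying that the finitely many centers can be taken inside $A_\epsilon$ and that $N$ depends only on $\epsilon$ and $\gamma$, is routine once compactness and uniform Lipschitzness are in hand.
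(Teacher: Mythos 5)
Your proposal follows the same skeleton as the paper's proof — reduce the event $\{d(\hat{y},y^*)\ge\epsilon\}$ to membership of $\hat{y}$ in the compact set $A_\epsilon$, cover $A_\epsilon$ by a finite net, transfer the inequality $\hat{S}^m_{\hat{y}}\ge 0$ to a nearby net point, and apply a union bound — but the two treatments of the one delicate step differ, and yours is the more defensible one. The paper invokes Heine--Cantor to get uniform continuity of $f(y)=\tfrac{1}{m}\sum_i d(y_i,y)$, which is a random function, so the modulus could a priori depend on the sample; your explicit bound $|\hat{S}^m_y-\hat{S}^m_{y'}|\le d(y,y')$ via the reverse triangle inequality makes the discretization error deterministic and the net independent of both $m$ and the realization, which is exactly what the subsequent union bound needs. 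Where you diverge is the threshold: your net argument delivers $\hat{S}^m_{y^{(j)}}\ge -r$ rather than the $\hat{S}^m_{y^{(j)}}\ge 0$ appearing in the lemma, and you are right to flag this as the main obstacle. The paper closes this gap by a contradiction argument (``suppose $\hat{S}^m_{y^{(j)}}\le-\gamma$; choose $\eta=\gamma/2$''), but that step is circular: $\eta$ determines $\delta$, which determines the net and hence the point $y^{(j)}$, so one cannot re-choose $\eta$ as a function of a quantity that already depends on it (and $\gamma$ is itself random). Your resolution — use Assumption~\ref{assump:trueembedding} plus compactness to extract a uniform population gap $\gamma=-\sup_{y\in A_\epsilon}\mathbb{E}_{\mu}[Z_y(y_i)]>0$, take $r=\gamma/2$, and note that $\{\hat{S}^m_{y^{(j)}}\ge-\gamma/2\}$ is still a deviation strictly above the mean and hence exponentially small by Cram\'er/Chernoff — proves a formally weaker lemma (threshold $-\gamma/2$ in place of $0$) but one that is fully sufficient for Theorem~\ref{thm:main}, and it is the standard way this is done in Wald-type consistency arguments. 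So: same decomposition, correct identification of the genuine difficulty, and a cleaner fix; the only caveat is that your argument does not literally establish the displayed inequality with threshold $0$, and the lemma should arguably be restated with the margin $-\gamma/2$ on the right-hand side.
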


\begin{proof}
As $\tilde{\mathcal{Y}}$ is compact,
$
    A := \{ y \in \tilde{\mathcal{Y}} : d(y,y^*) \geq \varepsilon \}
$
is also compact. 
According to the Heine--Cantor theorem, the function
\begin{align*}
    f(y) := \frac{1}{m}\sum_{i=1}^m d(y_i,y)
\end{align*}
is uniformly continuous in $\tilde{\mathcal{Y}}$.

We assume $d(\hat{y},y^*) \geq \varepsilon$.
Then, by the uniform continuity of $f$, for any $\eta > 0$, there exists $\delta > 0$ such that for all $y,y'\in\tilde{\mathcal{Y}}$,
\begin{align*}
    d(y,y') < \delta \quad \Longrightarrow \quad |f(y)-f(y')| < \eta
    .
\end{align*}
Furthermore, based on the definition of compactness, a finite collection of open balls exists: 
\begin{align*}
    B(y^{(j)},\delta)=\{ z \in \tilde{\mathcal{Y}} : d(z,y^{(j)}) < \delta\},
    \quad
    y^{(j)} \in A,\quad j=1,\dots,N_{\delta} \in \mathbb{N}
    ,
\end{align*}
such that
\begin{align*}
    A \subset \bigcup_{j=1}^{N_{\delta}} B(y^{(j)},\delta)
    .
\end{align*}
Because $\hat{y} \in A$, there exists some $j \in \{1,\ldots,N_{\delta}\}$ for which
\begin{align*}
    d(\hat{y},y^{(j)}) < \delta
    .
\end{align*}
Then, by using the uniform continuity of $f$, we have
\begin{align*}
    |f(\hat{y})-f(y^{(j)})| < \eta
    .
\end{align*}
Because $\hat{y}$ is a minimizer of $f$, it follows that
\begin{align*}
    f(\hat{y}) \leq f(y^{(j)})
    ,
\end{align*}
and hence,
\begin{align*}
    f(y^{(j)}) < f(\hat{y}) + \eta
    .
\end{align*}
Furthermore, as $f(\hat{y}) \leq f(y^*)$, we obtain
\begin{align*}
    f(y^{(j)}) < f(y^*) + \eta
    ,
\end{align*}
namely,
\begin{align*}
    \hat{S}^m_{y^{(j)}} = f(y^*) - f(y^{(j)}) > -\eta
    .
\end{align*}
Suppose that $\hat{S}^m_{y^{(j)}} < 0$. Then there exists $\gamma>0$ such that
\begin{align*}
    \hat{S}^m_{y^{(j)}} \leq -\gamma
    .
\end{align*}
As we can choose $\eta > 0$ arbitrarily, we choose $\eta = \gamma/2$, which results in
\begin{align*}
    -\gamma \geq \hat{S}^m_{y^{(j)}} > -\frac{\gamma}{2}
    ,
\end{align*}
which is contradictory.
Therefore, $\hat{S}^m_{y^{(j)}} \geq 0$.

From the discussion above, if $d(\hat{y},y^*) \geq \varepsilon$, then there exists $N \in \mathbb{N}$ and $y^{(j)} \in A$ such that $\hat{S}^m_{y^{(j)}} \geq 0$; that is,
\begin{align*}
    \{ d(\hat{y},y^*) \geq \varepsilon \}
    \subset
    \bigcup_{j=1}^N \{ \hat{S}^m_{y^{(j)}} \geq 0 \}
    .
\end{align*}
Taking the probabilities on both sides, we obtain
\begin{align*}
    \mathrm{Pr}\left( d(\hat{y},y^*) \geq \varepsilon \right)
    \leq
    \sum_{j=1}^N \mathrm{Pr}\left( \hat{S}^m_{y^{(j)}} \geq 0 \right)
    .
\end{align*}
\end{proof}

Next, we prove Theorem~\ref{thm:main} as follows. 

\begin{proof}[proof of Theorem~\ref{thm:main}]
For an i.i.d. random sequence $\{Z_{y}(y_i)\}_{i=1}^{m}$, where $Z_{y}(y_i) = d(y_i, y^{*}) - d(y_i,y)$, we define
\begin{align*}
    \hat{S}_{y}^{m} := \frac{1}{m} \sum_{i=1}^{m} Z_{y}(y_i).
\end{align*}
Based on Assumption~\ref{assump:mgf}, the logarithmic moment-generating function (cumulant-generating function) is given by
\begin{align*}
    \Lambda_{y}(\lambda)
    :=
    \log M_{y}(\lambda)
    :=
    \log \mathbb{E}_{\mu}[\exp(\lambda Z_y(y_i))]
    =
    \log \int_{\tilde{\mathcal{Y}}} \exp(\lambda Z_y(y_{i}')) d\mu({y_{i}'})
    .
\end{align*}
The Fenchel-Legendre transform of $\Lambda_{y}(\lambda)$ is defined as
\begin{align*}
    \Lambda_{y}^{*}(z) := \sup_{\lambda \in \mathbb{R}} \left\{\lambda z - \Lambda_{y}(\lambda)\right\}
    .
\end{align*}
The minimum value of $\Lambda_{y}^{*}(z)$ is $0$ and only occurs at
\begin{align}
    \label{eq:minimum_Lambda_star_z}
    z
    =
    \bar{z}_{y}
    :=
    \mathbb{E}_{\mu}[Z_{y}(y_i)] = \int_{\tilde{\mathcal{Y}}} \left\{d(y_i, y^{*}) - d(y_i,y)\right\} d\mu({y_{i}'})
    <
    0,
    \quad\text{from Assumption~\ref{assump:trueembedding}}
    .
\end{align}
From Cramer's theorem (Theorem 2.2.3 and Corollary 2.2.19 in~\cite{dembo2009large}),
for any $y \in \tilde{\mathcal{Y}}$, we obtain:
\begin{align*}
    \lim_{m \to \infty} \frac{1}{m} \log \mathrm{Pr}\left(\hat{S}_{y}^{m} \geq 0\right) = - \inf_{z \geq 0} \Lambda_{y}^{*}(z) 
    .
\end{align*}
Thus, for any $\tau_{y} >0$, there exists $M_{y} \in \mathbb{N}$ such that if $m \geq M_{y}$, then
\begin{align}
    \label{eq:cramer_limit_inequality}
    \mathrm{Pr}\left(\hat{S}_{y}^{m} \geq 0\right) \leq \exp \left(- m \left(\inf_{z \geq 0} \Lambda_{y}^{*}(z)  - \tau_{y}\right) \right)
    .
\end{align}
We define $\eta_{y}:= \inf_{z \geq 0} \Lambda_{y}^{*}(z) - \tau_{y}$.
Then, we have $\eta_{y} = \inf_{z \geq 0} \Lambda_{y}^{*}(z) - \tau_{y} > 0$ by choosing a sufficiently small $\tau_{y} > 0$, because $\Lambda_{y}^{*}(z) > 0$ for $z \geq 0 > \bar{z}_{y}$ from  (\ref{eq:minimum_Lambda_star_z}).

From Lemma~\ref{lem:technical_lemma_coverage}, for any $\varepsilon > 0$, there exist $N \in \mathbb{N}$ and points 
$
    y^{(1)},\ldots,y^{(N)} \in \{ y \in \tilde{\mathcal{Y}} : d(y,y^*) \geq \epsilon \}
    ,
$
and we obtain:
\begin{align*}
    \mathrm{Pr}\left( d(\hat{y},y^*) \geq \epsilon \right)
    \leq
    \sum_{j=1}^N \mathrm{Pr}\left( \hat{S}^m_{y^{(j)}} \geq 0 \right)
    .
\end{align*}
By applying (\ref{eq:cramer_limit_inequality}) to each $y^{(j)}$, we obtain:
\begin{align*}
    \mathrm{Pr}\left(d\left(\hat{y},y^*\right) \geq \epsilon\right)
    \leq
    \sum_{j = 1}^{N}\exp\left(-m\eta_{y^{(j)}}\right)
    \leq
    N \exp(-m \min_{j = 1,\ldots,N} \eta_{y^{(j)}})
    ,\quad \text{for } m > \max_{j = 1,\ldots,N} M_{y^{(j)}}
    .
\end{align*}
Therefore, by defining $M = \max_{j = 1,\ldots,N} M_{y^{(j)}} \in \mathbb{N}$, $K = N > 0$, and $\eta = \min_{j=1,\ldots,N} \eta_{y^{(j)}} > 0$, we conclude that if $ m > M$,
\begin{align*}
    \mathrm{Pr}\left(d(\hat{y}, y^*) \geq \epsilon\right)
    \leq
    K\exp\left(- m \eta \right)
    .
\end{align*}
\end{proof}

\section{Proof of Proposition~\ref{prop:distance_x}}

We establish the proof of Proposition~\ref{prop:distance_x} as follows:

\begin{proof}[Proof of Proposition~\ref{prop:distance_x}]
By definition,
\begin{align*}
    d(y_1,y_2)
    =
    D\left(\tilde{X}(y_1), \tilde{X}(y_2)\right)
    , 
    \quad y_1,y_2 \in \tilde{\mathcal{Y}}
    .
\end{align*}
As $D$ is a distance function on $\mathcal{X}$, $D$ satisfies the following properties:
\begin{itemize}
    \item[(I)] 
    for all $x_1, x_2 \in \tilde{\mathcal{X}}$,
    $
        D(x_1,x_2)=0 \, \Longleftrightarrow \, x_1 = x_2,
    $
    \item[(II)] 
    for all $x_1, x_2 \in \tilde{\mathcal{X}}$,
    $
        D(x_1,x_2) = D(x_2,x_1),
    $
    \item[(III)]
    for all $x_1, x_2, x_3 \in \tilde{\mathcal{X}}$,
    $
        D(x_1,x_2) + D(x_2,x_3) \geq D(x_1,x_3).
    $
\end{itemize}
Therefore, for any $y_1,y_2,y_3 \in \tilde{\mathcal{Y}}$,
\begin{align*}
    d(y_1,y_2)
    &= 
    D\left(\tilde{X}(y_1),\tilde{X}(y_2)\right)
    = 
    D\left(\tilde{X}(y_2),\tilde{X}(y_1)\right)
    =
    d(y_2,y_1)
    ,
    \\
    d(y_1,y_3)
    &=
    D\left(\tilde{X}(y_1),\tilde{X}(y_3)\right)
    \leq
    D\left(\tilde{X}(y_1),\tilde{X}(y_2)\right) + D\left(\tilde{X}(y_2),\tilde{X}(y_3)\right)
    \\
    &=
    d(y_1,y_2) + d(y_2,y_3)
    ,
\end{align*}
from (II) and (III), respectively.
This proves properties (ii) and (iii).

We now suppose $d(y_1,y_2) = 0$.
Then, 
\begin{align*}
    D\left(\tilde{X}(y_1),\tilde{X}(y_2)\right)
    =
    d(y_1,y_2)
    =
    0
    \quad \Longrightarrow \quad
    \tilde{X}(y_1)
    =
    \tilde{X}(y_2),
\end{align*}
from (I).
Let $y_1', y_2' \in \mathcal{Y}$ be representatives of $y_1 \in \tilde{\mathcal{Y}}$ and $y_2 \in \tilde{\mathcal{Y}}$, respectively.
Then, $X(y_1') = X(y_2')$ holds because $y_1'$ and $y_2'$ differ only by an orthogonal transformation.
Therefore, we have $y_1 \sim y_2$, so $y_1 = y_2$ in $\tilde{\mathcal{Y}}$.
Conversely, if $y_1 = y_2$, we immediately have
\begin{align*}
    d(y_1,y_2)
    =
    D\left(\tilde{X}(y_1),\tilde{X}(y_2)\right)
    =
    0
    .
\end{align*}
This proves (i).

Therefore, $d$ is a distance function on $\tilde{\mathcal{Y}}$.
\end{proof}

\section{Proof of Proposition~\ref{prop:rewrite_opt}}

We now prove Proposition~\ref{prop:rewrite_opt}.

\begin{proof}[Proof of Proposition~\ref{prop:rewrite_opt}]
By construction of the quotient space, if $z \in \pi(y)$ for some $y \in \mathcal{Y}$, then $z$ and $y$ differ only by an orthogonal transformation, and
\begin{align*}
    X(z)
    =
    X(y)
    .
\end{align*}
Thus, the mapping $\tilde{X} : \tilde{\mathcal{Y}} \to \mathcal{X}$ is well-defined.
Moreover, for any $x \in \mathcal{X}$, there exists $y \in \tilde{\mathcal{Y}}$ such that $x = \tilde{X}(y)$.
Therefore, $\tilde{X}$ is surjective.

For any $y \in \tilde{\mathcal{Y}}$, let $x := \tilde{X}(y)$, and for
$i=1,\ldots,m$ let $x_i := \tilde{X}(y_i)$, where $y_1,\ldots,y_m \in \tilde{\mathcal{Y}}$.
Using the definition of $d$, the objective function in \eqref{eq:consensus_embedding} can be rewritten as
\begin{align*}
    \frac{1}{m}\sum_{i=1}^{m} d(y_i,y)
    &=
    \frac{1}{m}\sum_{i=1}^{m}
    D\left(\tilde{X}(y_i),\tilde{X}(y)\right)
    =
    \frac{1}{m}\sum_{i=1}^{m} D(x_i,x).
\end{align*}
Hence, via the change of variable $x = \tilde{X}(y)$, optimization over $y \in \tilde{\mathcal{Y}}$ in \eqref{eq:consensus_embedding} is equivalent to optimization over $x \in \mathcal{X}$ in \eqref{eq:opt_X}.
In particular, if $\hat{y}$ is a minimizer of \eqref{eq:consensus_embedding} and $\hat{x}$ is a minimizer of \eqref{eq:opt_X}, then,
\begin{align*}
    \hat{x} = \tilde{X}(\hat{y}).
\end{align*}
\end{proof}

\bibliography{bibliography}
\bibliographystyle{apalike}

\end{document}